\setlist[itemize]{noitemsep, topsep=0pt} % 去掉分点之间的间距
\newcommand{\tabincell}[2]{\begin{tabular}{@{}#1@{}}#2\end{tabular}}
\newcommand{\hide}[1]{} %hide
\newcommand{\vpara}[1]{\vspace{0.05in}\noindent \textbf{#1 }}
\newcommand{\beq}[1]{\vspace{-0.03in}\begin{equation}#1\end{equation}\vspace{-0.03in}}
\newcommand{\beqn}[1]{\vspace{-0.04in}\begin{eqnarray}#1\end{eqnarray}\vspace{-0.04in}}
\newcommand{\beal}[1]{\vspace{-0.03in}\begin{align}#1\end{align}\vspace{-0.03in}}
\newcommand{\model}{\textsc{TableLLM}\xspace}
\newcommand{\smodel}{\textsc{TableLLM} }
\theoremstyle{plain}
\newtheorem{lemma}{Lemma}[section]  % Lemma numbering within sections
\theoremstyle{plain}
\newtheorem{theorem}{Theorem}[section]  % Theorem numbering within sections
\title{\model: Enabling Tabular Data Manipulation by LLMs in Real Office Usage Scenarios}
\author{
Xiaokang Zhang$^{1,3}$\thanks{Equal Contributions.}\thanks{Work was done when interned at Zhipu AI.},
Sijia Luo$^{1}$\footnotemark[1],
Bohan Zhang$^{1,3}$,
Zeyao Ma$^{1,3}$,
Jing Zhang$^{1,3}$\thanks{Corresponding Author.},\\
Yang Li$^{1,3}$,
Guanlin Li$^{1,3}$,
Zijun Yao$^{2}$,
Kangli Xu$^{2}$,
Jinchang Zhou$^{2}$,\\
Daniel Zhang-Li$^{2}$,
Jifan Yu$^2$,
Shu Zhao$^4$,
Juanzi Li$^2$,
Jie Tang$^2$
\\[3pt]
$^1$School of Information, Renmin University of China,
$^2$Tsinghua University,
\\
$^3$Key Laboratory of Data Engineering and Knowledge Engineering, Beijing, China
\\
$^4$Computer Science, Anhui University, China
\\[1pt]
\small \texttt{\{zhang2718, luosijia0906, zbhmint, zeyaoma, zhang-jing\}@ruc.edu.cn}
}
\begin{document}
\maketitle

\newtheorem{problem}{Problem}
\begin{abstract}
We introduce \model, a robust large language model (LLM) with 8 billion parameters, purpose-built for proficiently handling tabular data manipulation tasks, whether they are embedded within documents or spreadsheets, catering to real-world office scenarios.
We propose a distant supervision method for training, which comprises a reasoning process extension strategy, aiding in training LLMs to understand reasoning patterns more effectively as well as a cross-way validation strategy, ensuring the quality of the automatically generated data.
To evaluate the performance of \model, we have crafted benchmarks tailored to address both document and spreadsheet formats as well as constructed a well-organized evaluation pipeline capable of handling both scenarios. Thorough evaluations underscore the advantages of \smodel when compared to various existing general-purpose and tabular data-focused LLMs.
We have publicly released the model checkpoint, source code, benchmarks, and a web application for user interaction\footnote{\url{https://tablellm.github.io/}}.

\end{abstract}

\section{Introduction}
\label{sec:introduction}
\begin{figure}[t]
  \centering
  \includegraphics[width=0.48\textwidth]{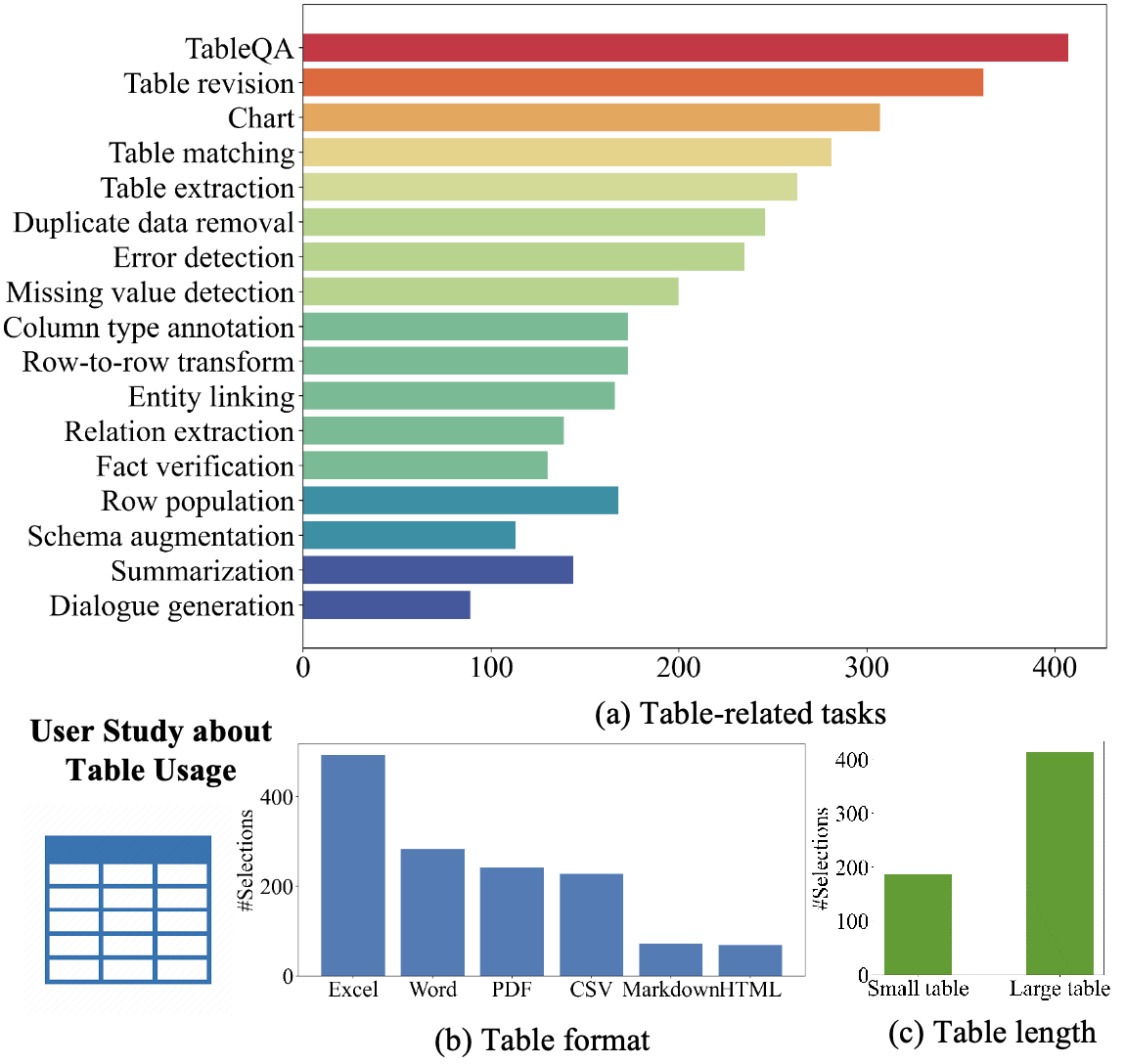} \caption{Illustration of the user study about (a) table-related tasks (tableQA, table revision, chart, table matching, duplicate data removal, etc.); (b) table formats (Excel, Word, etc.); (c) table length (Small: $<50$ rows, Large: $\geq 50$ rows).} 
 \label{fig:survey} 
\end{figure}

Large amounts of data are organized in tabular form and widely used in various scenarios. However, working with tabular data can be challenging, as many table-related tasks are laborious, error-prone, and require specialized skills. Automating these tasks offers significant benefits to both academic and industrial sectors, attracting considerable interest~\cite{badaro2023Transformers,dong2022Table}.

To capture insights from office users, we conduct an extensive user study involving a questionnaire distributed to 507 diverse paiticipants, focusing on table-related tasks. Details about the survey are presented in Appendix~\ref{sec:surveyexplanation}. As shown in Figure~\ref{fig:survey}, participants preferred tasks involving tableQA, revision, chart creation, and matching, primarily using Excel/CSV and Word/PDF formats, including long tables. These findings highlight two characteristics of real-world office use compared to academically-focused table tasks. \textbf{(1) Diverse Operations:} user preferred tasks involve a wide range of operations, including query, update, merge, and chart, which go beyond tableQA task. \textbf{(2) Unique Processing Approaches for Different Formats:} Word/PDF documents often contain contextual textual data, allowing for hybrid querying. Excel/CSV spreadsheets contain regular and long tables, enabling more intricate operations like update and merge.

Previous studies have focused on improving a model's reasoning capabilities for table question answering. Moving beyond tableQA, some of these endeavors have also tackled diverse table-related tasks, including fact verification~\cite{DATER,TableLlama,reactable,structgpt,tapex}, column type annotation~\cite{li2023tablegpt,TableLlama}, table matching~\cite{li2023tablegpt}, schema augmentation~\cite{li2023tablegpt,TableLlama}, and more. 
However, existing methods for handling tabular data in real-world office scenarios have limitations. Some use LLMs to directly extract answers from internal parameters, which is effective for document-embedded tables but inadequate for long tables and diverse spreadsheet operations. Others focus on writing and executing code for spreadsheets but struggle with hybrid queries combining text and tabular data.
% However, many of these efforts may not fully capture the realism of working with tabular data in real-world office scenarios. Existing approaches either leverage LLMs to derive answers directly from internal parameters, which works well for document-embedded tabular data, or specialize in writing and executing code to process spreadsheet data. The former struggles with long tables and diverse operations in spreadsheets, while the latter fails to handle hybrid queries involving both text and tabular data. 

Based on this insight, We present \model, specifically designed to handle a wide array of table operations in spreadsheet and document usage scenarios, named tabular data manipulation in real office usage scenarios. 
To facilitate model training, we introduce a distant supervision method that complements the reasoning process of existing benchmarks, aiding in training LLMs to understand reasoning patterns more effectively. Additionally, we validate the automatically generated training data through a cross-way validation strategy, ensuring data quality. We also provide a theoretical analysis of the effectiveness of cross-way validation compared to self-check and same-way validation.
Utilizing this training data, we fine-tune Llama3.1(8B)~\cite{llama3-report}, resulting in the development of \model. This model adeptly handles document-embedded tabular data through an inner-parameter-driven approach and spreadsheet-embedded tabular data via a code-driven method.

A rigorous performance assessment is conducted, involving the collection of primary tableQA test instances from existing benchmarks and the creation of additional table manipulation instances by an annotation team. Given the complex evaluation process under the two scenarios, we design a meticulous evaluation method that considers query, update, merge and chart operations.
% The results unequivocally demonstrate \model's superiority over state-of-the-art fine-tuning language models and prompt-driven LLMs for tabular data. 
\textbf{\smodel proves to be on par with the most capable commercial LLM GPT-4o in the document-embedded scenario, and even outperforms GPT-4o in the spreadsheet-embedded scenario.}

In the realm of tabular data processing research, our contributions encompass:
(1) Addressing a practical problem of tabular data manipulation in real-world office usage scenarios.
(2) Presenting techniques that extend reasoning processing and integrate a cross-way validation strategy to enhance the quality of distant supervision training data. 
(3) Delivering a high-quality open-source LLM tailored for tabular data manipulation in 8B, thereby enhancing accessibility and fostering collaboration within the community.
(4) Offering an online application service to facilitate convenient usage.

\section{Related Work}
\label{sec:related}
% We review table tasks, including basic analysis tasks represented by tableQA and table manipulation.

\vpara{Table Question Answering.}
Beyond the primary tableQA task, various research endeavors tackle basic table analysis tasks such as fact verification, column type annotation and schema augmentation, typically involving web-extracted tables of relatively short length with textual content. This research has evolved through three main approaches: (1) \textbf{Representation Learning}: Many traditional methods, such as TaBERT~\cite{tabert}, TAPAS~\cite{tapas}, TableGPT~\cite{tablegpt2020}, Tabbie~\cite{iida2021tabbie} and TAG-QA~\cite{zhao2023localize}, focus on intricate encoder designs with various positional encodings and attention mechanisms, while TAPEX~\cite{tapex} and GraPPa~\cite{yu2020grappa} integrate SQL execution as a pre-training task; (2) \textbf{Finetuning LLM}: Researchers leverage LLM to train unified models like TableLlama~\cite{TableLlama}, TAT-LLM~\cite{zhu2024tat} and TableGPT2~\cite{su2024tablegpt2} on multiple table-related benchmarks, with UnifiedSKG~\cite{unifiedskg} extending this to structured data tasks; and (3) \textbf{Prompting LLM}: Researchers develop multi-step prompting strategies for GPT series models, employing tools like SQL and Python, such as DATER's SQL usage~\cite{DATER}, StructGPT's self-defined interfaces~\cite{structgpt}, and Binder's result integration method~\cite{Binder}.

\vpara{Table Manipulation.}
A new research direction aims to enhance table manipulation capabilities, particularly focusing on tasks such as insert, update, and delete operations within spreadsheet formats like Excel and CSV, as well as databases~\cite{sheetcopilot,c3,dinsql,datacopilot}. Such tasks often involve working with lengthy and regular tables, making it practical to utilize LLMs alongside tools to address them. For instance, DB-GPT~\cite{xue2023db}, ChatDB~\cite{hu2023chatdb}, C3~\cite{c3} and Din-SQL~\cite{dinsql} translate questions into SQL queries. SheetCopilot~\cite{sheetcopilot} and DataCopilot~\cite{datacopilot} develop their atomic interfaces based on Excel's embedded functions and various programming languages, allowing LLMs to invoke them. 
%DB-GPT: 主要使用SQL

\model aims to tackle both table question answering and manipulation, making it more applicable to real-world user scenarios. Methodologically, \model fine-tunes a foundation model using synthetic data. We highlight the proposed cross-validation method, specifically designed to accommodate the unique characteristics of tables, ensuring the reliability of the synthetic data.

\section{Problem Definition}
\label{sec:problem}
\begin{figure}[t]
  \centering
\includegraphics[width=0.36\textwidth]{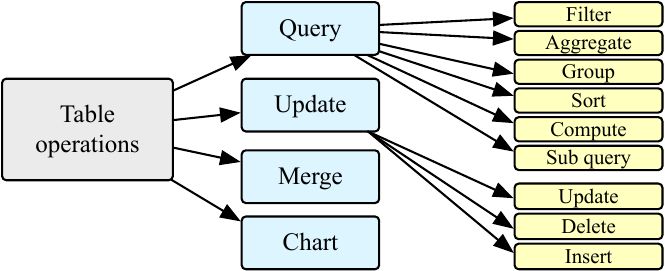} \caption{Common operations for table manipulation.} 
 \label{fig:operation} 
\end{figure}

\vpara{Tabular Data} refers to data organized in table or grid format. 
% Each row typically represents a different record, while each column represents a different attribute of the record.
On top of it, \textbf{document-embedded tabular data} is tabular data integrated into documents, often in Word/PDF files, while \textbf{spreadsheet-embedded tabular data} refers to tables within spreadsheets, typically in Excel/CSV files.

\vpara{Operation Definition.}
% In the user study, tableQA tasks are addressed by query operations, table revision tasks by update operations, table matching tasks by merge operations, and chart generation tasks are regarded as standalone operations. Data cleaning tasks, such as error detection, duplicate removal, and missing value detection, also rely on update operations. In general,
In light of the user study, tabular data manipulation tasks can be categorized into four primary operations: query, update, merge, and chart, as detailed in Figure~\ref{fig:operation}. The ``query'' operation selects desired data, encompassing filter, aggregate, group, and sort functions, covering most tableQA scenarios. The ``update'' operation modifies, deletes or adds data, while the  ``merge'' operation combines two tables. Lastly, the ``chart'' operation visualizes data using bar, pie or line charts. 
% These operations guide the generation of supervision data, as discussed in Section~\ref{sec:data_construction}.

\begin{problem}
\textbf{Tabular Data Manipulation in Real Office Usage Scenarios} focuses on developing an LLM that can perform a range of query, update, merge, and chart operations with tabular data embedded in documents and spreadsheets. 
\end{problem}

For document-embedded tabular data, querying specific information is the primary need, whereas spreadsheet-embedded tabular data often demand querying, data modification, and chart generation. 
% Tasks for document-embedded data suit the LLM's inner parameters whereas spreadsheet-embedded tasks demand a more intricate, code-driven approach for effective manipulation.

\section{\model}
\label{sec:aproach}

\begin{figure*}[t]
  \centering
\includegraphics[width=\textwidth]{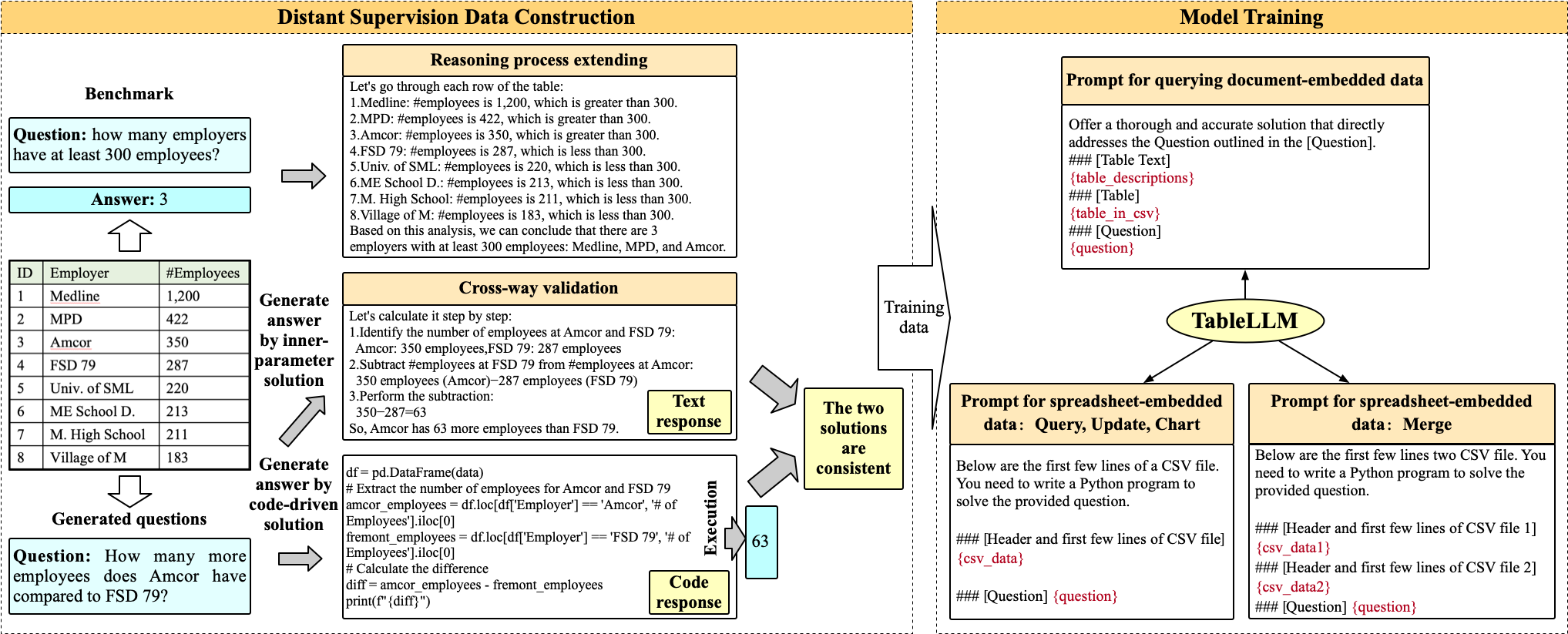} \caption{Overview illustration of \model. The construction of distant supervision data involves two key steps: (1) expanding the reasoning processes based on (question, answer) pairs from existing benchmarks, and (2) cross-way validation of generated (question, answer) pairs. Model training necessitates unique prompts tailored to operations in different scenarios.} 
 \label{fig:overview} 
\end{figure*}

The overview design of \model is shown in Figure~\ref{fig:overview}, which consists two primary aspects:
\textbf{(1) Distant Supervision Data Construction.}
The development of distant supervision data involves the integration of both existing benchmark training data and new questions and answers generated from available tabular data. To enhance the training of LLMs, we suggest expanding the reasoning processes within benchmark data. 
%This includes text-based reasoning for queries on document-embedded tabular data and code-based reasoning for manipulations of spreadsheet-embedded tabular data.
Additionally, to assure the quality of the automatically generated training data, we introduce a cross-way validation strategy which utilizes diverse solution methods for cross-validation. 
% This strategy utilizes diverse solution methods for cross-validation, ensuring the reliability and accuracy of the data;
\textbf{(2) Model Training.}
The training process utilizes distinct prompts for document-embedded and spreadsheet-embedded tabular data.

\subsection{Distant Supervision Data Construction}
\label{sec:data_construction}
% We generate distant supervision data by expanding the reasoning process of existing benchmarks and validating self-generated questions and answers using a cross-way validation strategy.

\vpara{Extending Reasoning Process for Existing Benchmarks.}
While existing benchmarks offer ample training data for tableQA, the simple short answers provided by individual instances fall short for tackling complex tabular data manipulation tasks, which often demand intricate reasoning processes to derive answers effectively. Therefore, we augment existing benchmarks by enriching their reasoning processes to facilitate the model training.

Primarily, to address queries on document-embedded tabular data, we gather training data from widely-adopted tableQA benchmarks including WikiTQ~\cite{WTQ}, FeTaQA~\cite{Nan2021FeTaQAFT}, and TAT-QA~\cite{TAT-QA}. Inspired by CoT~\cite{cot}, We extend the provided short answers by presenting GLM-4-Plus~\cite{glm2024chatglm} with the (question, answer) pairs and instructing it to enhance the reasoning process. This augmentation is represented in textual form, rather than as code, to align with the nature of queries involving hybrid text and tabular data inputs. 
% We conjecture that expanding on the reasoning process beyond the short answers during training could enhance the reasoning ability of LLMs. 
Notably, for WikiTQ and FeTaQA solely provide tabular data, we supplement them by generating table descriptions using GLM-4-Plus. Due to the inner-parameter-driven technique employed, we impose a constraint on the length of input tables, limiting them to a token count of fewer than 500. To validate the quality of these text-based reasoning processes, we refer to the evaluation prompt of CritiqueLLM~\cite{critiquellm}, an LLM specialized in rating, and use DeepSeek-V3~\cite{liu2024deepseek} to assess the consistency between the reasoning process and the answers provided in the benchmarks. 

Furthermore, to handle queries on spreadsheet-embedded tabular data, we compile training data from two Text2SQL benchmarks: WikiSQL~\cite{zhongSeq2SQL2017} and Spider~\cite{Spider}. Given that spreadsheet-embedded tabular data manipulation primarily involves pure tabular data inputs and complex table manipulations, it aligns more with code-driven techniques. Thus, we select training instances from WikiSQL and Spider, as they correspond to SQL queries. However, instead of directly using the provided SQL queries, we expand pandas code as the reasoning process for each (question, answer) pair by Deepseek~\cite{bi2024deepseek}, a recent powerful code LLM, as Pandas offers greater flexibility to support functionalities such as chart beyond query, update, and merge. We ensure the quality of the generated code by validating that the executed outcomes align with the provided answers in the benchmarks.
Note for Spider, in line with our focus on single-table operations typical in office scenarios, we exclude multi-table queries and those whose SQL queries yield null results to better reflect real-world applications.
%GPT-3.5

\vpara{Automatically Generating Training Data by Cross-way Validation.}
While the training data derived from existing benchmarks is of high quality, the variety of questions and answers, especially the table update, merge, and chart operations they offer is limited. To address this, we introduce a cross-way validation strategy for automatically generating new questions and answers using only the provided tabular data. The process is as follows:

\textbf{(1) Question Generation.} We select 5,177 tables from WikiTQ, 5,000 from TAT-QA, and 4,019 from FeTaQA with less than 500 tokens to simulate document-embedded tabular data. For each table, GLM-4-Public generates questions involving single or multiple table query operations, as depicted in Figure~\ref{fig:operation}. GLM-4-Public also creates contextual table descriptions for WikiTQ and FeTaQA-sourced tables, while TAT-QA tables retain their original text context. Furthermore, we select 1,300 long tables from GitTables~\cite{hulsebos2023gittables}. For each table, we generate 10 questions involving various table manipulation operations, as illustrated in Figure~\ref{fig:operation}. Existing benchmarks typically focus on table query operations, so update, chart, and merge operations are all generated. For query, update, and chart operations, we prompt GLM-4-Public for question generation. However, for the merge operation, given its well-defined nature, we directly construct templates to generate the merge question.
Appendix~\ref{sec:prompt-self-create} provides the prompts and templates used for question generation.

\textbf{(2) Answer Generation and Cross-way Validation.} 
We utilize LLMs to generate answers for the questions derived from document-embedded tables and propose a cross-validation method to ensure quality. This approach leverages the unique characteristics of tables, where answers can be obtained in two distinct ways: directly from the LLM internal parameters or by generating and executing Pandas code. By comparing the results from both methods, we ensure the reliability of the answers.

Specifically, for each question based on tabular data, we use GLM-4-Plus to generate answers through both internal-parameter inference and code-driven execution. We generate 10 answers via the internal-parameter approach and 50 answers via the code-driven approach. To establish a reference answer, we aggregate the 50 code-driven answers through majority voting. Since code execution results often include additional descriptions, making exact string matching impractical, we use the ROUGE-L\cite{lin2004rouge} metric to calculate the text similarity among the code results, cluster them, and select the centroid of the largest cluster as the reference answer.
Finally, from the 10 internal-parameter-generated answers, we select the one most aligned with the reference answer as the final output. This cross-way validation, combining internal-parameter-driven and code-driven techniques, leverages their complementary strengths to enhance answer quality and reliability.

\textbf{Theoretical Proof.} This cross-validation approach is inspired by ensemble learning~\cite{dietterich2002ensemble}, which combines multiple weak learners to create a strong learner. Building on this concept, we conduct an improved theoretical inference to ensure the quality of automatically generated data.
Let's denote $Y_a$ as the event that the first response is correct, $Y_b$ as the event that the second response is correct, $Y$ as the event that both responses are correct, and $E$ as the event that the two responses are consistent. Based on these definitions, we can establish the following theorem:

\begin{theorem}
(1) If $A$ and $B$ are drawn from the same distribution such that $P(Y_a) = P(Y_b) = p > 1/2$, then consistency checking outperforms single inference, i.e., $P(Y|E) \geq P(Y_a)$. (2) If \( A \) and \( B \) are further drawn from independent distributions, the effect will be superior (in terms of expectation).
\end{theorem}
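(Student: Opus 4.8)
The plan is to reduce both claims to elementary manipulations of conditional probabilities, organized around the event $E$ that the two responses agree. First I would observe that when the two answers are consistent, the ``both correct'' event $Y$ essentially coincides with ``response $A$ correct'' on the agreement event, modulo the probability that $A$ and $B$ agree on a common \emph{wrong} answer; call this latter probability $q \coloneqq P(E \setminus Y)$, i.e. the chance both outputs coincide but are incorrect. Then $P(Y \cap E) = P(Y_a \cap E)$ is not quite right in general, but $P(Y \cap E) = P(E) - q$ while $P(Y_a) = P(Y_a \cap E) + P(Y_a \cap E^c)$, so the comparison $P(Y \mid E) \ge P(Y_a)$ becomes, after clearing the positive denominator $P(E)$, an inequality of the form $P(E) - q \ge P(Y_a)\,P(E)$, equivalently $(1 - P(Y_a))\,P(E) \ge q$. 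The key step is therefore to bound $q$, the mass of ``agreeing-but-wrong,'' from above and $P(E)$, the agreement mass, from below, using only $P(Y_a) = P(Y_b) = p > 1/2$.

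For part (1) I would make explicit the modelling assumption that is implicit in the paper's setup: conditioned on the underlying question, $A$ and $B$ are drawn from the same answer distribution, and ``consistency'' means returning the identical answer. Writing the per-question answer distribution as $(\pi_0, \pi_1, \pi_2, \dots)$ with $\pi_0 = p$ the probability of the (unique) correct answer, agreement on the correct answer has probability $\pi_0^2$ and agreement on a wrong answer has probability $\sum_{i \ge 1} \pi_i^2$; so $P(E) = \sum_{i \ge 0}\pi_i^2$ and $q = \sum_{i\ge1}\pi_i^2$, whence $P(Y\mid E) = \pi_0^2 / \sum_{i\ge0}\pi_i^2$. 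The inequality $P(Y\mid E) \ge \pi_0$ is then equivalent to $\pi_0 \ge \sum_{i\ge1}\pi_i^2 \big/ \big(1+\sum_{i\ge1}\pi_i^2/\pi_0 \cdots\big)$ — more cleanly, to $\pi_0(1-\pi_0) \ge \sum_{i\ge1}\pi_i^2$, since $\sum_{i\ge0}\pi_i^2 = \pi_0^2 + \sum_{i\ge1}\pi_i^2$ and the claim rearranges to $\pi_0^2 + \pi_0\sum_{i\ge1}\pi_i^2 \le \pi_0^2 + \sum_{i\ge1}\pi_i^2$ after multiplying through. Now $\sum_{i\ge1}\pi_i^2 \le \big(\sum_{i\ge1}\pi_i\big)\cdot \max_{i\ge1}\pi_i \le (1-\pi_0)\cdot(1-\pi_0)$, but we actually want the sharper bound $\sum_{i\ge1}\pi_i^2 \le (1-\pi_0)\max_{i\ge 1}\pi_i \le (1-\pi_0)\pi_0$, where the last step uses $p = \pi_0 > 1/2 \ge \max_{i\ge1}\pi_i$ (a wrong answer cannot be more likely than the correct one when the correct one already has mass $>1/2$). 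This gives exactly $\sum_{i\ge1}\pi_i^2 \le \pi_0(1-\pi_0)$, closing part (1); if one does not want to assume the correct answer is modal, one instead invokes $p>1/2$ directly via $\max_{i\ge1}\pi_i \le 1-\pi_0 < 1/2 < \pi_0$. Finally I would average over questions; since the inequality $P(Y\cap E) \ge P(Y_a)P(E)$ holds pointwise per question, it survives taking expectations, though one should remark that $P(Y\mid E)$ for the mixture is a weighted average and the bound passes through because it is of the linear-in-measures form $P(Y\cap E)\ge p\cdot P(E)$ before dividing.

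For part (2), the ``independent'' case, I would specialize: if $A,B$ are independent draws from the same distribution $(\pi_i)$ \emph{and} further we only trust the output when $E$ occurs, then the per-question agreement-correctness is $\pi_0^2/\sum_i\pi_i^2$ exactly as above — so in fact independence does not change the conditional; what improves is the \emph{unconditional accuracy attainable}, because independence lets one drive the wrong-agreement mass $\sum_{i\ge1}\pi_i^2$ down relative to correlated sampling, and more to the point one can compare against the alternative of drawing $B$ from the \emph{same realized} answer as $A$ (the ``same-way'' baseline), under which $E$ is trivially satisfied and $P(Y\mid E) = P(Y_a) = \pi_0$ with no gain at all. I would therefore phrase part (2) as: under independence, $P(Y\mid E) = \pi_0^2/\sum_i \pi_i^2 > \pi_0$ strictly unless the distribution is a point mass, and the expected post-filtering correctness strictly exceeds that of same-way validation; the proof is the strict version of the part-(1) computation, noting equality in $\sum_{i\ge1}\pi_i^2 \le \pi_0(1-\pi_0)$ forces all wrong mass onto a single answer of probability exactly $\pi_0$, impossible when $\pi_0>1/2$, so the inequality is strict whenever $1-\pi_0>0$.

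The main obstacle I anticipate is not any single calculation but pinning down the right probabilistic model: the theorem as stated treats $Y_a, Y_b, Y, E$ as abstract events, but the inequality $P(Y\mid E)\ge P(Y_a)$ is simply false for arbitrary events with $P(Y_a)=P(Y_b)>1/2$ (e.g. if $E$ were the event $A$ is wrong), so some structural hypothesis — that $E$ is ``$A$ and $B$ return the same answer'' and that $Y_a=\{A\text{ correct}\}$, $Y_b=\{B\text{ correct}\}$, $Y = Y_a\cap Y_b$, with $A\perp B$ given the question in part (2) and identically distributed in part (1) — is doing real work and must be made explicit. A secondary subtlety is the averaging over the question distribution: one must be careful to prove the per-question inequality in its ``numerator'' form $P(Y\cap E\mid \text{question})\ge p\cdot P(E\mid\text{question})$ before dividing, so that $\mathbb E_{\text{question}}$ preserves it and only then divide by $\mathbb E[P(E\mid\text{question})]$; doing it in the wrong order would require a Jensen-type argument that is not actually available. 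Everything else is bookkeeping with $\sum\pi_i^2$ versus $(\sum\pi_i)(\max\pi_i)$.
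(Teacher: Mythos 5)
Your part (1) is correct and is essentially the paper's own argument in a cleaner dress. The paper likewise reduces to $P(Y|E)=P(Y)/P(E)$ with $P(Y)=p^2$, bounds the ``agree but wrong'' mass above by $(1-p)^2$, and closes with $\frac{p^2}{p^2+(1-p)^2}\ge p$ via the rearrangement $p(1-p)(2p-1)\ge 0$ (its Lemma D.1) --- the same inequality you reach as $\pi_0(1-\pi_0)\ge\sum_{i\ge 1}\pi_i^2$. Your explicit per-question answer distribution $(\pi_i)$, your observation that the claim is false for arbitrary events absent the structural hypotheses ($E$ = identical answers, $A\perp B$ given the question), and your insistence on averaging the inequality in its numerator form $P(Y\cap E)\ge p\,P(E)$ before dividing are all points the paper leaves implicit; they sharpen rather than change the argument.

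Part (2), however, does not prove the stated claim. The theorem compares two ensembles: (a) both responses drawn from one common distribution $D$ (``same-way'' validation) and (b) responses drawn from two \emph{independent} distributions $D_A$ and $D_B$ (``cross-way'' validation, e.g., a text-based and a code-based solver), and asserts that $E[P(Y|E)]$ is at least as large in case (b), the expectation being taken over the random error profiles. You instead compare independent draws from a single distribution against a degenerate fully-correlated baseline ($B$ equal to $A$'s realized answer) and conclude that ``independence does not change the conditional'' --- true of your comparison, but not the theorem. The missing ingredient is the paper's decomposition of the wrong-agreement mass over the $k$ error types, $\sum_i P_a(e_i)P_b(e_i)=\frac{(1-p)^2}{k}+k\,\mathrm{Cov}\bigl(P_a(e),P_b(e)\bigr)$: under independence of the profiles the expected covariance vanishes, so the expected wrong-agreement mass is exactly $\frac{(1-p)^2}{k}$, whereas for a common profile Cauchy--Schwarz gives $\sum_i P_a(e_i)^2\ge\frac{(1-p)^2}{k}$ pointwise; since $P(Y|E)$ is decreasing in this mass, the cross-way expectation dominates. (One further word is needed --- in the paper as well --- to pass the comparison through the nonlinear map from wrong-agreement mass to $P(Y|E)$ inside the expectation, the very order-of-operations issue you flag yourself in part (1).) As written, your part (2) establishes a different and weaker statement.
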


This theorem suggests that when the model's probability of providing correct answers exceeds 1/2, employing consistency verification is decisively more effective than direct inference. Moreover, in terms of expected performance, utilizing cross-validation with two independent distributions surpasses consistency checks with a single distribution. The proof is provided in Appendix~\ref{sec:cross-way-verification}.

For questions on spreadsheet-embedded tabular data, we employ GLM-4-Plus to generate a code solution, which is followed by the generation of an alternative code solution using GLM-4-Plus again. The accuracy of the executed outcomes from the first code are verified by comparing them with the outcomes of the second code. Given the potential diversity of two coding solutions yielding the same answers, this dual-coding strategy can be regarded as stemming from different distributions. Thus it also functions as a cross-way validation method.

\subsection{Model Training}
In the scenario of document-embedded tabular data, the input for LLMs includes both the text and the entire content of the table. However, in the case of spreadsheet-embedded tabular data, due to the typically extensive length of the table, only the header and a subset of rows are provided as input to the LLM. The prompt for the merge operation is specifically designed, illustrated in Figure~\ref{fig:overview}.

Given the prompt $x$ as input, we enable LLMs to generate either the textural or code solution, collectively denoted as $y$. We hybridize the document-embedded and spreadsheet-embedded training data in a 1:1 ratio, thoroughly shuffle them, and then partition them into batches for training.
%Subsequently, the training loss function is defined as:

% \[
% \mathcal{L}(x,y) = - \sum_{i=1}^{|y|} \log \model(y_i | x, y_{j<i}), 
% \]

% \noindent where $\model(y_i | x, y_{j<i})$ represents the probability of generating the $i$-th token of $y$ given $x$ and the preceding tokens of $y$. This loss function resembles the standard language model loss but exclusively considers the loss computed on $y$. 

The trained single model addresses two types of data sources. Given that code models tend to excel in reasoning-intensive tasks compared to text models~\cite{liang2023holistic}, combining the two data sources can enhance text-level reasoning with code-level reasoning. Moreover, a single model could alleviate deployment pressure.

\subsection{Model Deployment as Web Application}

% % \begin{figure}[t]
% %   \centering
% %   \includegraphics[width=0.5\textwidth]{figure/app.jpg} \caption{The Web application implemented for manipulating tabular data.} 
% %  \label{fig:app} 
% % \end{figure}

We launch our \smodel as a web application, with a screenshot shown in Figure~\ref{fig:app}. Users can upload tabular data from documents (Word, PDF) and spreadsheets (Excel, CSV). The system parses PDF and Word files into CSV for visualization. Users enter queries, and the \smodel generates responses—tables, charts, or text—based on prompts and document type. It also supports table merging, allowing users to merge two spreadsheets with specified conditions\footnote{Currently, the system is configured to support the merging of two tables only.}. Details are in Appendix~\ref{sec:webapp}.

\section{Experiment}
\label{sec:experiment}
\begin{table}[t]
  \newcolumntype{?}{!{\vrule width 1pt}}
  \newcolumntype{C}{>{\centering\arraybackslash}p{2em}}
  \centering
  \caption{Benchmark (test set) statistics}
  \label{tb:benchmark}
  \small
  \scalebox{0.76}{
  \begin{tabular}{c?c?c?c}
    \toprule
    Scenario &	Name &	Description & Size \\
     \midrule
 
     \multirow{3}{*}{\tabincell{c}{Document\\-embedded}}  
     & WikiTQ  & <500 tokens \& add text & 633\\
     & FeTaQA & <500 tokens \& add text & 753\\
     & TAT-QA & <500 tokens &800\\

     \midrule
     \multirow{3}{*}{\tabincell{c}{Spreadsheet\\-embedded}}  
     & WikiSQL & Remove vague questions &1,000 \\
     & Spider  & Choose single table & 512  \\
     & Our created &Query/Update/Merge/Chart &1,200 \\
    \midrule
    Both & \model-bench & - & 4,898 \\
    \bottomrule

  \end{tabular}
  }
\end{table}

\begin{table*}[t]
\newcolumntype{?}{!{\vrule width 1pt}} %tapas
	\newcolumntype{C}{>{\centering\arraybackslash}p{2em}}
	\caption{
		\label{tb:overall} Overall evaluation in both document-embedded and spreadsheet-embedded tabular data scenarios (\%)
	}
	\centering 
	\small
	\renewcommand\arraystretch{1.0}
\scalebox{0.82}{
\begin{tabular}{l?ccc?ccc?c?c}
\toprule
\multirow{2}{*}{Model}  
& \multicolumn{3}{c?}{Document-embedded tabular data}   & \multicolumn{3}{c?}{Spreadsheet-embedded tabular data} & \multirow{2}{*}{Average accuracy} & \multirow{2}{*}{Inference times} \\
\cmidrule{2-4} \cmidrule{5-7}     
& WikiTQ & TAT-QA & FeTaQA & WikiSQL & Spider & Our created & &\\
\midrule
TaPEX &38.55 &--&--&83.90 & 15.04 & --& 45.83& 1\\
TaPas & 31.60 & --& --& 74.20& 23.05 & --&42.95&1\\
TableLlama &24.01 &22.25 &20.47 & 43.70&--&--& 23.36&1\\
TableGPT2(7B) &77.25 &88.12 &75.58 &63.0 &77.34 &74.42&75.95 &1\\
\midrule
Llama3.1(8B) &71.88 &74.25 &83.40 &40.60 &18.75 &43.17 &55.34 &1\\
GPT-3.5 &58.45 &72.13 &71.18 &81.70 &67.38 &77.08 & 69.82& 1\\
GPT-4o &\textbf{91.47} &\textbf{91.50} &\textbf{94.42} &\underline{84.00} &69.53 &\underline{77.83} &\underline{84.79} &1\\
\midrule
CodeLlama (13B) &43.44 &47.25 &57.24 &38.30 &21.88 &47.58 &43.63 &1\\
Deepseek-Coder (33B) & 6.48&11.00 &7.12 &72.50 &58.40 &73.92 &33.84&1\\
%StarCoder & & & & & & & & \\
\midrule
StructGPT (GPT-3.5) &52.45 &27.53 &11.80 &67.80 &\textbf{84.80} &--&43.06&3\\
Binder (GPT-3.5) &61.61 &12.77 &6.85 &78.60 &52.55 &--&36.25&50\\
DATER (GPT-3.5) &53.40 &28.45 &18.26 &58.20 &26.52 &--& 32.98&100\\
%ReAcTable & & & & & & & & \\
%SheetCopilot & & & & & & & & \\
\midrule
%46k QA + 27k code 6epoch
\model (8B)& \underline{89.10} & \underline{89.50} & \underline{93.36} & \textbf{89.6}
 & \underline{81.05} &\underline{77.83} &\textbf{86.74} &1\\
\bottomrule
\end{tabular}
}
\begin{tablenotes}
  \item * Underline represents the runner up.
\end{tablenotes}
\end{table*}

% In this section, we first introduce the created test set and evaluation metrics tailored to real office usage scenarios. We then proceed to evaluate the proposed \smodel alongside comparison models to showcase the advantages of \model. Additionally, we conduct various ablation studies to verify the effectiveness of different components in our model.

\subsection{Test Set Creation}
We collect test sets from established benchmarks for document-embedded query tasks, including WikiTQ~\cite{WTQ}, FeTaQA~\cite{Nan2021FeTaQAFT}, and TAT-QA~\cite{TAT-QA}. For spreadsheet-embedded table tasks, we utilize test sets from WikiSQL~\cite{zhongSeq2SQL2017} and Spider~\cite{Spider}, which align with our query operation requirements. We extract the table, question, and answer from each instance. 
% To ensure quality, we select clearly phrased questions with accurate answers from WikiSQL and discard instances with vague questions or flawed SQL resulting in multiple potential answers or incorrect results.

As no benchmarks exist for table update, merge, and chart operations, we create test set through human annotation. We choose 50 long tables from InfiAgent-DABench~\cite{infiagent}, ensuring they are entirely distinct from our training data. Following the process outlined in Section~\ref{sec:data_construction}, we generate questions and answers. An annotator team verifies the generated content, including answers, codes, and operation types. 
% They execute the code to ensure functionality and check if the answers align with the questions, making adjustments as needed. 
Since initial questions lack linguistic diversity, we utilize five prominent models from Huggingface for rewriting to enhance variety. This results in a composition of 10\% original questions and 90\% rewritten ones, with each model contributing to 18\% of the rewrites. 
% Human annotators also manually review the questions to retain essential information and avoid irrelevant additions.
 Table~\ref{tb:benchmark} displays the benchmark statistics. 

\subsection{Evaluation Approach}
Given the diverse range of operation types in our dataset, we have adopted a categorized evaluation approach across different operations:

\begin{itemize}[leftmargin=*]
    \item \textbf{Query operations:} For the answers obtained through code execution, we conduct an exact match comparison between the model's output and the ground truth answers. For answers inferred directly via inner-parameters, we use DeepSeek-V3 to assign a score from 1 to 10 by comparing the model's output with the ground truth answers, with a score threshold of 7 considered correct, as discussed in Section~\ref{sec:data_construction}. This is because the generated answers are often lengthy and challenging to precisely match.
    We also conduct a meta evaluation on DeepSeek-V3's rating scores by humans, as detailed in Appendix~\ref{sec:critiqueds}.

    \item \textbf{Update and merge operations:} As these operations directly modify tables, we require the model's output to be the complete modified table. We then perform an exact match comparison between the model's output and the ground truth answers to determine correctness.
    
    \item \textbf{Chart operations:} Assessing charting operations is challenging through direct answer comparison. Instead, we compare model code output with ground truth code. DeepSeek-V3 is once again employed to compare the model's output code with the ground truth code, using a score threshold of 5 for evaluation.

\end{itemize}

Based on the correctness determination, we assess accuracy.

\subsection{Comparison Methods}
Comparison methods are divided into four types: 
\begin{itemize}[leftmargin=*]
    \item \textbf{Pre-training and fine-tuning LLMs:} This category encompasses models like TaPas~\cite{tapas} (based on BERT) TAPEX~\cite{tapex} (based on BART),  TableLlama~\cite{TableLlama} (based on Llama2 (7B)) and TableGPT2(7B)~\cite{su2024tablegpt2}.
    
    \item \textbf{General LLMs:} This group includes GPT-3.5~\cite{InstructGPT}, GPT-4o~\cite{gpt4}, and Llama3.1 (8B)~\cite{llama3-report}.
    
    \item \textbf{Coding-specific LLMs:} This category contains LLMs tailored for coding tasks, including CodeLlama~\cite{CodeLlama} and DeepSeek~\cite{bi2024deepseek}.

    \item \textbf{Prompt-driven LLMs:} This group includes StructGPT~\cite{structgpt}, ReAcTable~\cite{reactable}, Binder~\cite{Binder}, and DATER~\cite{DATER}, focusing on creating sophisticated prompts to guide LLMs in processing tabular data.
\end{itemize}

% \vpara{Baseline Selection Principle.}
% We compare with methods featuring fully-maintained codes runnable under Linux Server, thus excluding ReAcTable~\cite{reactable} and SheetCopilot~\cite{sheetcopilot}. DataCopilot~\cite{datacopilot} is also not considered due to its self-designed interfaces limited to certain areas like finance. DIN-SQL~\cite{dinsql} and C3~\cite{c3} are excluded as they focus on generating SQL and rely on databases. Daagent~\cite{infiagent}, designed for advanced data analysis, is also excluded as its functionalities do not align with the intended scope of our assessment.

\vpara{Implementation.}
(1) TaPas and TAPEX have individual checkpoints trained on WikiTQ and WikiSQL. We assess their performance in document-embedded tabular data scenarios using the WikiTQ-trained versions and in spreadsheet-embedded tabular data scenarios using the WikiSQL-trained versions. As for TableLlama and TableGPT2, we evaluate its unique checkpoints directly. (2) For both general and coding-specific LLMs, we provide customized prompts for scenarios to handle document-embedded and spreadsheet-embedded tabular data, as detailed in Appendix~\ref{sec:prompt_baselines}. (3) 
Prompt-driven LLMs follow their established prompts.
We unify StructGPT's prompts for WikiTQ, TAT-QA, FeTaQA, and WikiSQL to match WikiTQ's format.
Meanwhile, Binder and DATER use a single unified set of prompts across all benchmarks.
% StructGPT, for instance, designs distinct prompts for WikiTQ, WikiSQL, and Spider. 
% We standardize StructGPT's prompts for WikiTQ, TAT-QA, FeTaQA, and WikiSQL, aligning them with the prompts used for WikiTQ. 
% Meanwhile, both Binder and DATER use a single unified set of prompts across all benchmarks.
(4) \smodel is trained using Llama3.1 (8B). During inference, we consistently apply the same set of prompts used during training phase. The generated training data is presented in Table~\ref{tb:training} in Appendix~\ref{sec:trainingdata}.

\subsection{Overall Experimental Results}
\vpara{Effectiveness.}
Table~\ref{tb:overall} displays the overall evaluation in two scenarios. ``--'' in the table indicates that the method does not support the dataset or that the tested accuracy is too low. The results show that \textbf{\smodel generally surpasses others in the spreadsheet-embedded scenario and is on par with GPT-4o in the document-embedded scenario}. Detailed findings include:

\textbf{(1) TaPEX and TaPas show limited performance due to their small model sizes.} These two pre-training and fine-tuning models only demonstrate relatively strong performance on WikiSQL and WikiTQ benchmarks.

\textbf{(2) StructGPT, Binder, and DATER's varying performance across datasets suggests a limitation in the generalization capability of prompt-driven LLMs.} While these models consistently perform well in the WikiTQ benchmark, their performance weakens on other datasets. StructGPT stands out in the Spider benchmark due to its customized prompts tailored for this specific dataset.

\textbf{(3) DeepSeek (33B) excels in the spreadsheet-embedded tabular data scenario.} This superior performance is attributed to its extensive optimization for coding capabilities. However, this specialization in coding comes at the expense of direct answer inference from inner parameters when dealing with document-embedded tabular data.

\textbf{(4) Our \smodel outperforms GPT-3.5 and GPT-4o in the spreadsheet-embedded  scenario.} Moreover, in our created benchmark with entirely distinct tabular data and questions from the training data, \smodel achieves an impressive 77.83\% accuracy, showcasing robust generalization ability. Conversely, in the document-embedded scenario, \smodel performs close to GPT-4o and significantly better than GPT-3.5, possibly due to the scenario's demand for extensive commonsense reasoning with text data, where \smodel could benefit from enhanced training in text QA. 

\vpara{Efficiency.}
All methods, except prompt-driven LLMs, require only one inference process per instance. However, Binder necessitates a one-step inference for each instance, requiring 50 samples per step for self-consistency validation. DATER requires four-step inferences for each instance, with self-consistency validation at each step, totaling 100 inferences per instance. StructGPT requires three inferences per question.

\subsection{Ablation Studies on Training Data}
\label{sec:ablationTrain}

\begin{table}[t]
  \newcolumntype{?}{!{\vrule width 1pt}} %tapas
  \newcolumntype{C}{>{\centering\arraybackslash}p{2em}} %tapas
  \centering
  \caption{Effect of diverse training data sources (\%)}
  \label{tb:trainingsources}
  \small
  \scalebox{0.8}{
 % \begin{tabular}{@{}cccc@{}c} %tapas
  \begin{tabular}{l?cc?cc}
    \toprule
    \multirow{2}{*}{Train data}&\multicolumn{2}{c?}{Document-embedded}&\multicolumn{2}{c}{Spreadsheet-embedded}\\
    \cmidrule{2-3}
    \cmidrule{4-5}
&  WikiTQ&TAT-QA& Spider& Our created\\
    \midrule
     Llama3.1 (8B)          &71.9 & 74.3  & 18.8 &43.2\\
     Original train data      &71.3 &68.5 & -- &--\\
     Extended train data      &83.1 &87.5 &77.9 &55.8 \\
     Generated train data  &82.2 &86.3 &62.1 &72.2 \\
     Mixed data  &84.0 &88.3 &80.9 &73.6 \\
     \bottomrule
  \end{tabular}
  }
\end{table}
\begin{table}[t]
  \newcolumntype{?}{!{\vrule width 1pt}} %tapas
  \newcolumntype{C}{>{\centering\arraybackslash}p{2em}} %tapas
  \centering
  \caption{Effect of cross-way validation strategy on document-embedded tabular data (\%)}
  \label{tb:cross-way-performance}
  \small
  \scalebox{0.97}{
 % \begin{tabular}{@{}cccc@{}c} %tapas
  \begin{tabular}{l?cc}
    \toprule
    Validation strategy &WikiTQ  &TAT-QA\\
     \midrule
     Self-check validation  &74.5  &81.3   \\
     Same-way  validation      &77.8 &85.8  \\
     Cross-way validation     &84.7& 87.9  \\
     \bottomrule
  \end{tabular}
  }
\end{table}
\vpara{Effect of Diverse Training Data Sources.}
We analyze the influence of different training datasets by comparing five distinct training configurations:

\begin{itemize}[leftmargin=*]
\item \textbf{Llama3.1 (8B)}: The base version without any training.
\item \textbf{With original training data of existing benchmarks}: Train Llama3.1 using 2,000 training instances from TAT-QA and WikiTQ, then evaluate on corresponding test sets. 
\item \textbf{With extended training data of existing benchmarks}: Train on 2,000 training instances from WikiTQ and TAT-QA, supplemented with extended reasoning process. Train on 2,000 training instances from Spider, supplemented with extended code, and evaluate on both Spider and our created test sets.
\item \textbf{With generated training data}: Train on 2,000 generated instances based on WikiTQ/TAT-QA's tabular data, then test on corresponding test sets. Train on 2,000 generated code-outputted instances based on GitLab's tabular data, and evaluate on Spider and our created test sets.
\item \textbf{With mixed data}: Train with a mix of 2,000 extended and 2,000 generated instances from TAT-QA/WikiTQ, then evaluate on corresponding test sets. Train with a mix of 2,000 extended Spider training instances and 2,000 generated code-outputted instances, and evaluate on both Spider and our created test sets.
\end{itemize}

As shown in Table~\ref{tb:trainingsources}, including the original training data leads to worse performance than the base version without training. This may be because the earlier benchmark data is too simplistic, hindering LLM training. The results demonstrate the effectiveness of incorporating extended reasoning processes, showcasing a performance boost of 15.6\% and 17.8\% on WikiTQ and TAT-QA respectively, compared to the base version. This improvement is mainly due to the inclusion of detailed explanations, helping LLMs recognize reasoning patterns. Furthermore, the addition of generated data yields an additional 14.3\% and 16.2\% enhancement in performance over the base version on WikiTQ and TAT-QA, respectively, emphasizing the value of automatically generated data. Notably, the combination of extended and generated training data leads to a significant 16.8\% and 18.8\% increase in performance relative to the base version, highlighting the advantages of integrating diverse data sources. The results observed on Spider and our created test sets further corroborate the benefits of extended and generated training data.

\vpara{Effect of Cross-way Validation.}
We examine the effectiveness of cross-way validation methods on document-embedded tabular data. For fair comparison, our cross-way validation method derives one answer from both the inner-parameter solution and the code-driven solution.
We compare it against two other validation methods: \textbf{Same-way validation}, which generates two direct answers by the same inner-parameter technique of GLM-4-Plus and assesses their alignment using DeepSeek-V3, and \textbf{Self-check validation}, which enables GLM-4-Plus to generate one textual solution and self-check its answer. Table~\ref{tb:cross-way-performance} presents that our cross-way validation method outperforms the other methods, due to its use of two distinct responses.

\subsection{Training Strategy Investigation}
\label{sec:trainingstragey}
We investigate two training aspects: data size and the ratio between document- and spreadsheet-embedded data. We explore the following variants:

\begin{itemize}[leftmargin=*]
 \item \textbf{Data size:} Options include 1k, 2k, 5k, 10k, 20k, and 40k instances.
 \item \textbf{Data ratio:} The proportion of document- to spreadsheet-embedded data, explored in ratios of 0:10, 2:8, 4:6, 5:5, 6:4, 8:2, and 10:0.
 % \item \textbf{Shuffle strategy:} Three approaches – full shuffle (instance-level shuffling), batch shuffle (keeping instances within a batch of the same type and shuffling batches), and epoch shuffle (keeping instances within an epoch of the same type and shuffling epochs).
\end{itemize}

The default configuration for our experiments is 10k training data instances, a 5:5 data ratio. 

Figure~\ref{fig:trainingstrategy} shows \smodel's accuracies under various training data settings. As depicted in Figure~\ref{subfig:datasize}, performance gains follow a log-linear relationship with the training data size, motivating us to stop early at 40K, which offers a cost-effective balance. Figure~\ref{subfig:dataratio} indicates that a 5:5 ratio yields balanced performance across both data types. 
% Lastly, Figure~\ref{subfig:shuffle} demonstrates that full shuffle and batch shuffle lead to faster convergence than epoch shuffle because epoch lacks a sufficient mixture of the two data sources.

\begin{figure}[t]
	\centering
	\subfigure[Data size]{\label{subfig:datasize}
		\includegraphics[width=0.2\textwidth]{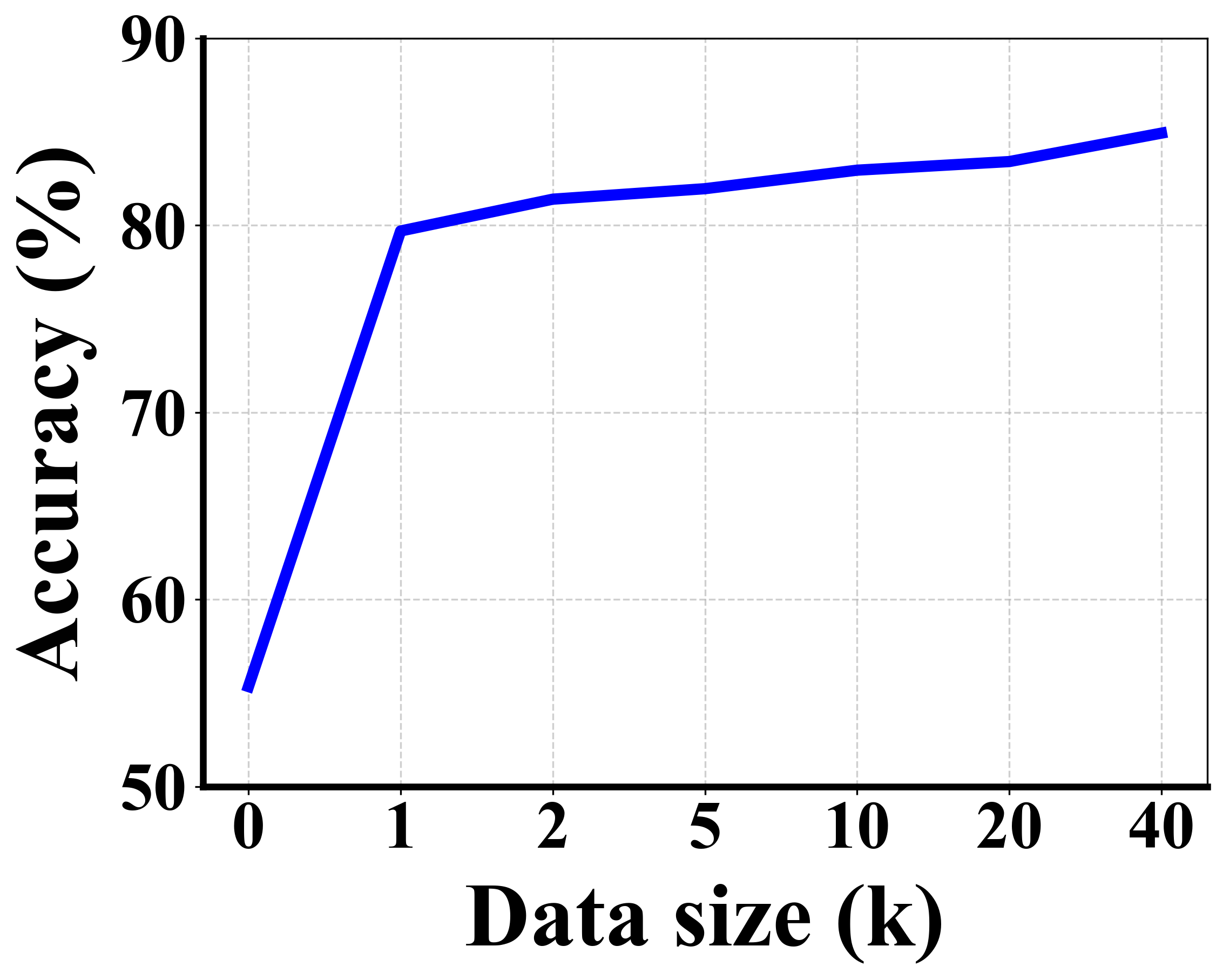}
	}
        \hspace{0.2cm}
	\subfigure[Data ratio]{\label{subfig:dataratio}
		\includegraphics[width=0.205\textwidth]{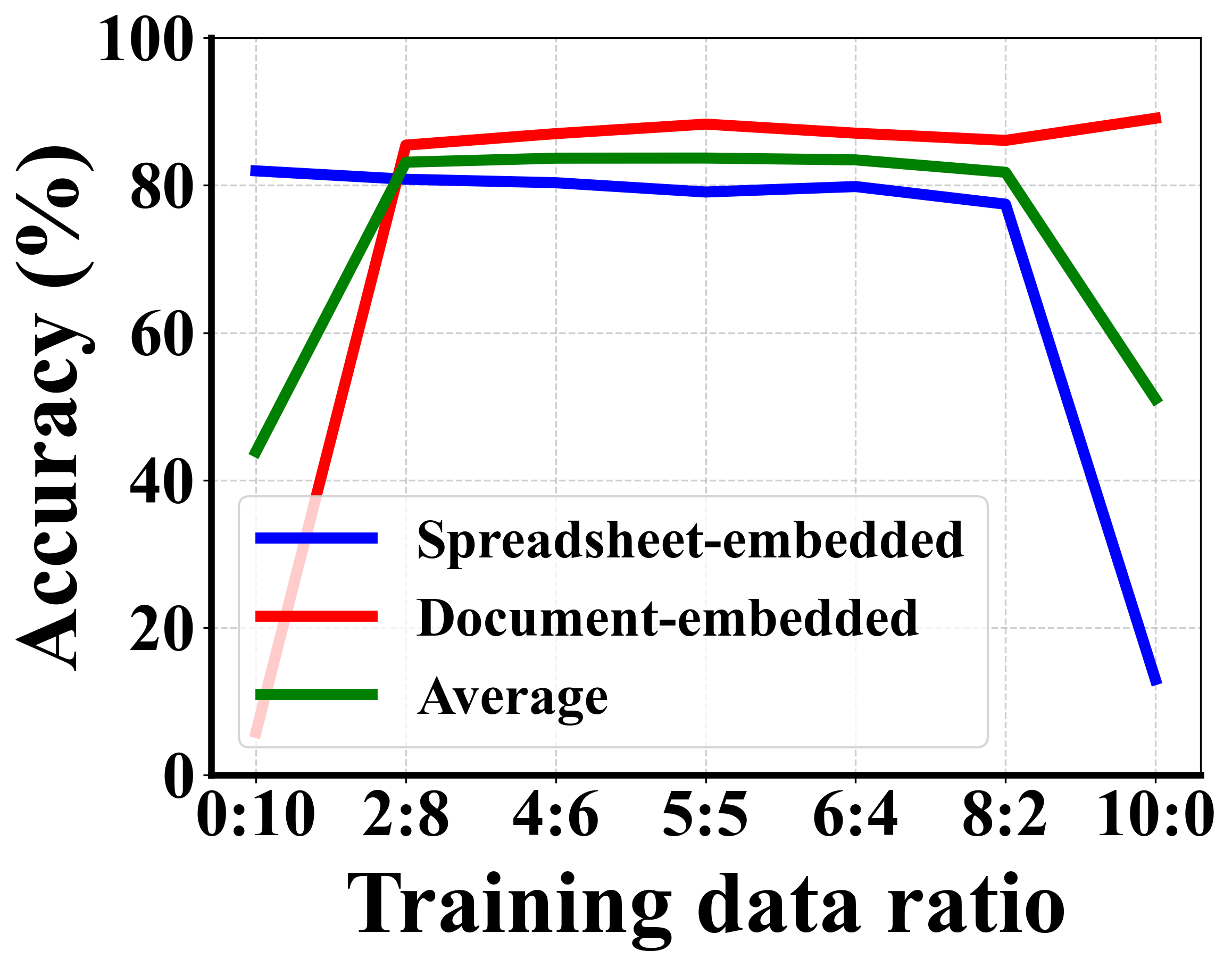}
	}		
 %        \hspace{-0.2cm}
 %        \subfigure[Shuffle strategy]{\label{subfig:shuffle}
	% 	\includegraphics[width=0.15\textwidth]{}
	% }		

 \caption{\label{fig:trainingstrategy} Effects of data size and ratio.}
\end{figure}

% \subsection{Meta Evaluation on CritiqueLLM}
% \label{sec:meta}
% To validate the reliability of CritiqueLLM used for evaluation, we conduct a meta-evaluation by human annotators on its evaluation results. Specifically, we select 400 test instances' responses from both document- and spreadsheet-embedded scenarios and assess their accuracy using both CritiqueLLM and human annotators following the same scoring criteria. By comparing their results, we find that the ``False Positive Percentage'' – incorrect predictions deemed correct by CritiqueLLM – is 3\%, and the ``False Negative Percentage'' – correct predictions judged as incorrect – is 4.25\%. These percentages highlight the reliability of CritiqueLLM. Mistakenly evaluated cases by CritiqueLLM are provided in Appendix~\ref{sec:critiqueLLM}.

\section{Conclusion}
\label{sec:conclusion}
This paper introduces \smodel(8B) tailored for tabular data manipulation in real office scenarios. We gather actual requirements from office settings and identify document-embedded and spreadsheet-embedded scenarios. Ensuring high-quality data through extended reasoning processes and cross-way validation on automatically generated training data, the resulting \smodel performs comparably to GPT-4o and surpasses it in the spreadsheet-embedded scenario. We anticipate that our dataset, model checkpoint, and code will provide a cost-effective solution for enhancing LLM capabilities for tables.

% Our pioneering study introduces a \smodel(13B) tailored for tabular data manipulation in real office scenarios. We gather actual requirements from office settings and identify document-embedded and spreadsheet-embedded scenarios. Ensuring high-quality data through extended reasoning processes and cross-way validation on automatically generated training data, the resulting \smodel performs comparably to GPT-3.5 and even surpasses GPT-4 in the spreadsheet-embedded scenario. We anticipate that our published dataset, model checkpoint, and code will offer a cost-effective solution for researchers and developers aiming to enhance LLM capabilities for tables and develop diverse table-related applications.

\section*{Limitations}
\label{sec:limitations}
\textbf{Diversity of Generated Data.} The training data, although extensive, may not fully capture the diversity of tabular data encountered in real-world applications. In the question generation phase of the cross-way strategy, the questions generated by GLM-4-Public are based on tabular data from the existing benchmark, which may not cover all the cases, and some of the generated questions are not challenging enough to enhance the model's reasoning about the tabular data. Therefore, the reliance on existing benchmarks and automatically generated data could introduce biases or gaps in the model's understanding of less common or more complex table structures. Future research could further improve the quality of synthetic training data by categorizing the difficulty of generating problems.

% \textbf{Scalability and Efficiency.} The model's performance and efficiency in handling very large tables (e.g., those with millions of rows) have not been extensively evaluated. This lack of comprehensive testing raises concerns about the model's scalability and robustness when applied to real-world datasets that often exceed the size of typical benchmark datasets. The current token limit of 500 for document-embedded tables might restrict the model's applicability to larger datasets, as it could lead to truncation of critical information or necessitate the use of complex preprocessing steps to fit the data within the constraints.

\bibliography{custom}

\clearpage

\appendix
\label{sec:appendix}
\section{License}
\label{sec:license}
We utilize training datasets from WikiTQ, FeTaQA, and TAT-QA for the document-embedded scenario, alongside WikiSQL, Spider, and GitTables for the spreadsheet-embedded scenario.

WikiTQ, FeTaQA, and Spider are distributed under the CC BY-SA-4.0 license, which allows users to freely share and adapt the material, provided appropriate credit is given and any derivative works are distributed under the same license. TAT-QA is distributed under the MIT license, permitting unrestricted use, modification, and distribution, provided the original license terms are included. WikiSQL is distributed under the BSD-3-Clause license, which allows for free use, modification, and distribution, with the condition that the original copyright notice and license terms are preserved. GitTables is distributed under the CC0-1.0 license, which effectively places the data in the public domain, allowing for unrestricted use, modification, and distribution without any requirements.

The parameters of \model, as well as the associated training data, will be made publicly accessible. In adherence to open-access principles, both the training data and the fine-tuned model will be released under the CC BY-SA 4.0 license. This approach supports the ethos of open science and fosters the reuse and redistribution of our resources under consistent terms.

\section{Web Application}
\label{sec:webapp}
% We present a screenshot of the web application deployed with our proposed \smodel in Figure~\ref{fig:app}, where Figure~\ref{subfig:update} shows an instruction with the update operation and Figure~\ref{subfig:chart} shows an instruction with the chart operation.

We launch our \smodel as a web application\footnote{\url{https://tablellm.github.io/}}, with a screenshot shown in Figure~\ref{fig:app}, where Figure~\ref{subfig:update} shows an instruction with the update operation and Figure~\ref{subfig:chart} shows an instruction with the chart operatio. The typical workflow is as follows: Users begin by uploading their tabular data embedded in documents (with support for Word and PDF formats) and spreadsheets (supporting Excel and CSV formats). The system utilizes Grobid\footnote{https://github.com/kermitt2/grobid} to parse PDF files and python-docx\footnote{https://pypi.org/project/python-docx/} for Word files, converting them into CSV format for web visualization. Users then enter queries or instructions in the query box. Depending on the type of uploaded document or spreadsheet, appropriate prompts from Figure~\ref{fig:overview} guide the \smodel to generate answers. The response could be a table, a chart, or a textual answer. Additionally, the application offers a feature for merging two tables, where users can upload two spreadsheets and specify the merging conditions in the query box\footnote{Currently, the system is configured to support the merging of two tables only.}.

We open the application for trial to a diverse group including teachers, students, administrators from universities, marketing professionals, human resources personnel, and research and development specialists. They are encouraged to provide feedback by clicking ``Thumbs up'' or ``Thumbs down''. So far, we have collected 2,000 use cases from users, with 1,560 involving spreadsheet-embedded scenarios (1,869 for single table operations and 131 for double table operations) and 440 for document-embedded scenarios. Among these, we have received 1,473 feedbacks with 1,293 ``Thumbs up'' and 180 ``Thumbs down'', closely aligning with the performance metrics reported in Table~\ref{tb:overall}. We conduct an error analysis in Appendix~\ref{sec:error} for further improvement.

\begin{figure*}[t]
	\centering
	\subfigure[Update operation]{\label{subfig:update}
		\includegraphics[width=0.9\textwidth]{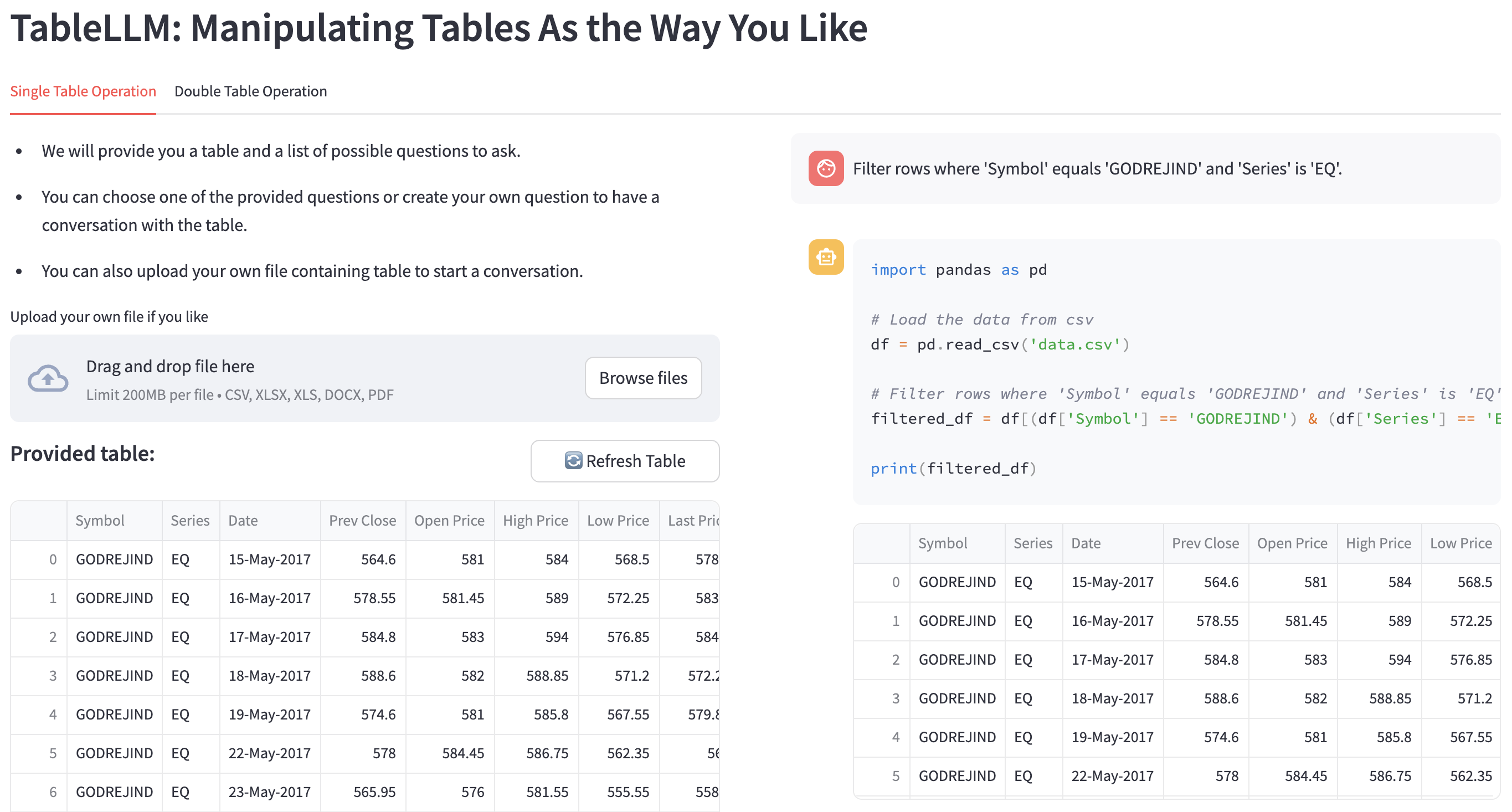}
	}
	\subfigure[Chat operation]{\label{subfig:chart}
		\includegraphics[width=0.9\textwidth]{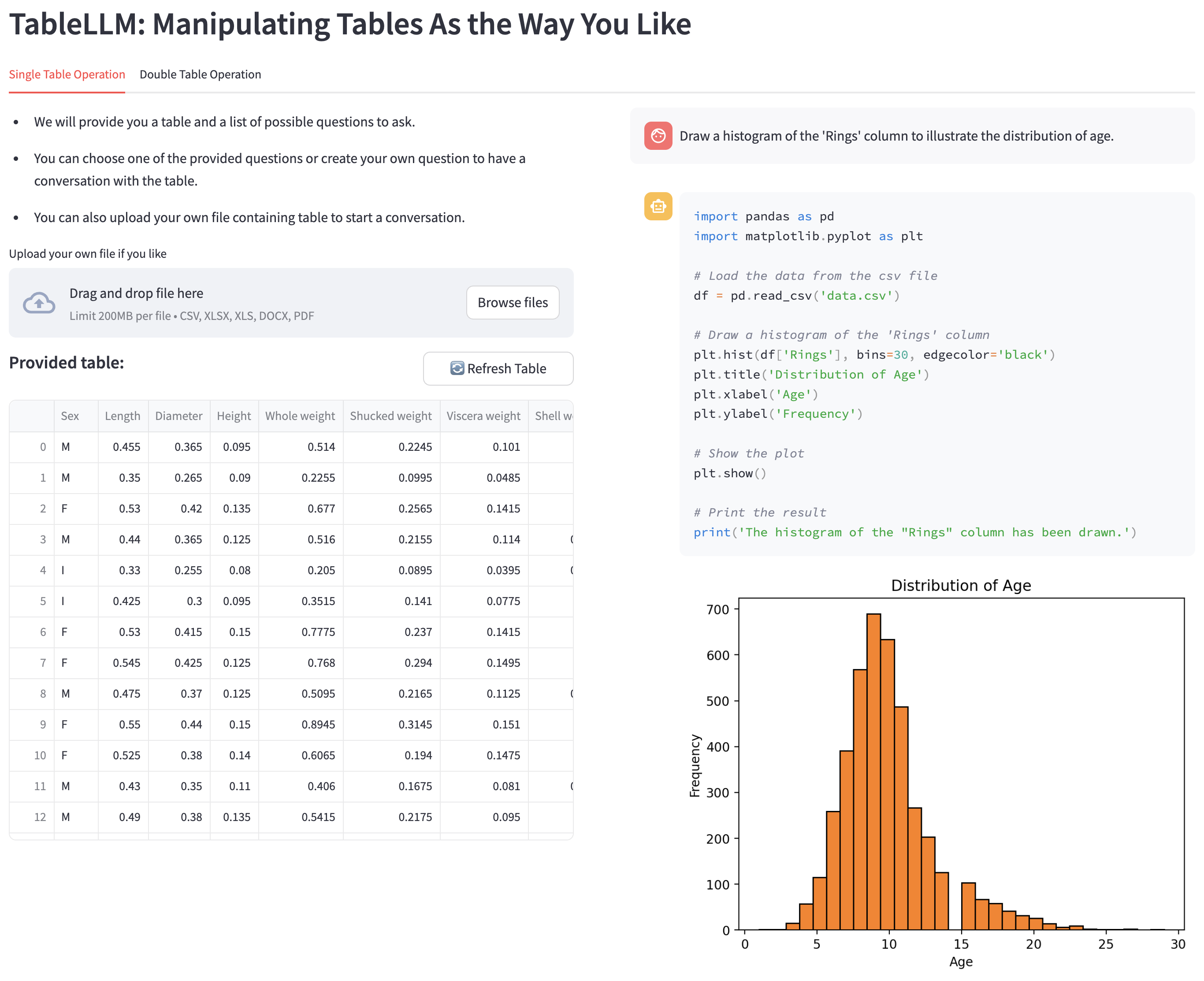}
	}		
 \caption{\label{fig:app} The web application deployed with our \model.}
\end{figure*}

\section{The Survey Details}
\label{sec:surveyexplanation}

We conduct a survey among universities and enterprises to assess users' needs for tabular data-related tasks in real office environments. We obtain a total of 507 valid responses, representing various roles, including research and development specialists (36.69\%), teachers (14.40\%), administrators (14.00\%), students (12.62\%), marketing professionals (4.14\%), and human resources personnel (1.97\%).

The survey results, depicted in Figure~\ref{fig:survey}, reveal: (a) Participants' demands for various table-related tasks, with TableQA (80.28\%) being the most sought-after, followed by Table revision (71.40\%), which involves creating, updating, and deleting tables. Tasks with demand exceeding 200 include TableQA, Table revision, Chart creation, Table matching, Duplicate data removal, Error detection, Missing value detection, and Table extraction. Notably, table extraction, focused on format conversion, is in demand but can be handled efficiently by non-LLM tools, thus not considered LLM-related tasks. (b) There's relatively less demand for tasks like column type annotation, entity linking, and fact verification. (c) Participants prefer Excel, Word, PDF, and CSV formats, and (d) long tables with more than 50 rows.

Below is the complete survey on table usage:

\global\mdfdefinestyle{exampledefault}{%
    linecolor=black,
    frametitlerule = true
}
\begin{enumerate}

    \item What is your occupation?
    \begin{itemize}
        \item[\textit{A}] Student
        \item[\textit{B}] Teacher
        \item[\textit{C}] Administrator
        \item[\textit{D}] Human Resources Professional
        \item[\textit{E}] Marketing Professional
        \item[\textit{F}] Research and Development Specialist
        \item[\textit{G}] Others[Fill in the Blank]
    \end{itemize}
    
    \item In your daily work, how often do you work with tables (such as Excel, CSV, or direct access to databases)?
    \begin{itemize}
        \item[\textit{A}] Rarely use (less than once a day on average)
        \item[\textit{B}] Occasionally use (1 to 5 times per day)
        \item[\textit{C}] Frequently use (5 to 20 times per day)
        \item[\textit{D}] Is my work theme (use more than 20 times a day)
    \end{itemize}
    
    \item In your normal work, What are the sizes of tables that you typically work with?[Multiple choice question]
    \begin{itemize}
        \item[\textit{A}] Tables under 50 rows.
        \item[\textit{B}] Tables over 50 rows.
    \end{itemize}
    
    \item What types of tables do you typically encounter and handle in your daily work?[Multiple choice question]
    \begin{itemize}
        \item[\textit{A}] Excel
        \item[\textit{B}] Word
        \item[\textit{C}] HTML
        \item[\textit{D}] CSV
        \item[\textit{E}] PDF
        \item[\textit{F}] Markdown
        \item[\textit{G}] Others[Fill in the Blank]
    \end{itemize}
    
    % \item Do you need Table manipulation at work? (For example, find the number of people with more than 90 points, delete a column in the table, draw a line graph)
    % \begin{itemize}
    %     \item[\textit{A}] Yes, I need Table manipulation.
    %     \item[\textit{B}] No, I don't need Table manipulation.
    % \end{itemize}
    
    \item Which Table manipulation tasks do you need to use in your work?[Multiple choice question]
    \begin{itemize}
        \item[\textit{A}] TableQA, e.g.,
        \begin{itemize}
            \item Find the number of people with grade above 90;
            \item Group them according to 90-100 points, 80-90 points, 60-80 points and 60 points, and count the number of people in each score segment;
            \item Find all conferences held in Jiangsu in the second half of 2023;
        \end{itemize}
        \item[\textit{B}] Table revision, e.g.,
        \begin{itemize}
            \item Sort by height column;
            \item Convert the Date column to Month/Day/year format;
            \item Insert a column ``total score'', representing the weighted sum of 60\% and 40\% from the first to the third column;
            \item Delete the ``normal score'' column;
        \end{itemize}
        \item[\textit{C}] Chart, e.g.,
        \begin{itemize}
            \item Draw statistical drawings, such as line diagrams, column charts, pie charts;
        \end{itemize}
        \item[\textit{D}] None.
    \end{itemize}
    
    % \item Do you need Table cleaning at work? (For example, automatically detect missing values, check and modify date formats, filter duplicate data by name)
    % \begin{itemize}
    %     \item[\textit{A}] Yes, I need Table cleaning.
    %     \item[\textit{B}] No, I don't need Table cleaning.
    % \end{itemize}
    
    \item Which Table cleaning tasks do you need in your work?[Multiple choice question]
    \begin{itemize}
        \item[\textit{A}] Missing value detection, e.g.,
        \begin{itemize}
            \item Detect missing values and fill in the mean value of the corresponding column;
        \end{itemize}
        \item[\textit{B}] Error detection, e.g.,
        \begin{itemize}
            \item Check a cell whose format is not ``month/day/year'' and convert it;
        \end{itemize}
        \item[\textit{C}] Delete duplicate data, e.g.,
        \begin{itemize}
            \item To filter duplicate data by name, only keep the first row with the same name;
        \end{itemize}
        \item[\textit{D}] None.
    \end{itemize}

    % \item Do you need Table analysis at work? (For example, determine the relationship between two given columns, determine whether ``profit growth in the first quarter of 2023 is 10\%'')
    % \begin{itemize}
    %     \item[\textit{A}] Yes, I need Table analysis.
    %     \item[\textit{B}] No, I don't need Table analysis.
    % \end{itemize}

    \item Which Table analysis tasks do you need in your work?[Multiple choice question]
    \begin{itemize}
        \item[\textit{A}] Column type annotation, e.g.,
        \begin{itemize}
            \item Given some examples of a column like 1,000 RMB, 1,500 RMB, and 2,000 RMB, name the column;
        \end{itemize}
        \item[\textit{B}] Entity linking, e.g.,
        \begin{itemize}
            \item Given a candidate combination of column names, assign an appropriate column name to each column in the table;
        \end{itemize}
        \item[\textit{C}] Row-to-row transform, e.g.,
        \begin{itemize}
            \item Predict the rating of the fourth team based on the ``win-loss rating'' of the first three teams;
        \end{itemize}
        \item[\textit{D}] Fact verification, e.g.,
        \begin{itemize}
            \item Based on the content of the table, determine whether ``profit growth in the first quarter of 2023 is 10\%'' is true;
        \end{itemize}
        \item[\textit{E}] None.
    \end{itemize}

    % \item Do you need Table-to-Text at work? (For example, give the table and the content of the historical conversation, generating the next conversation)
    % \begin{itemize}
    %     \item[\textit{A}] Yes, I need Table-to-Text.
    %     \item[\textit{B}] No, I don't need Table-to-Text.
    % \end{itemize}

    \item Which Table-to-Text tasks do you need in your work?[Multiple choice question]
    \begin{itemize}
        \item[\textit{A}] Summarization, e.g.,
        \begin{itemize}
            \item Generate a title for the table;
        \end{itemize}
        \item[\textit{B}] Dialogue generation, e.g.,
        \begin{itemize}
            \item Given the table and the history of the conversation, generate the next conversation;
        \end{itemize}
        \item[\textit{C}] None.
    \end{itemize}

    % \item Do you need Table augmentation at work? (For example, generate specific row data with given ``name'', ``age'', ``height'')
    % \begin{itemize}
    %     \item[\textit{A}] Yes, I need Table augmentation.
    %     \item[\textit{B}] No, I don't need Table augmentation.
    % \end{itemize}

    \item Which Table augmentation tasks do you need in your work? [Multiple choice question]
    \begin{itemize}
        \item[\textit{A}] Row population, e.g.,
        \begin{itemize}
            \item Given ``name'', ``age'', ``height'', generate specific row data;
        \end{itemize}
        \item[\textit{B}] Schema augmentation, e.g.,
        \begin{itemize}
            \item Given ``Date'', ``Growth rate'', ``Net income'', expand the other columns;
        \end{itemize}
        \item[\textit{C}] None.
    \end{itemize}
    
    \item Do you need Table matching at work? (For example, merge two tables as required)
    \begin{itemize}
        \item[\textit{A}] Yes, I need Table matching.
        \item[\textit{B}] No, I don't need Table matching.
    \end{itemize}

    \item Do you need Table extraction at work? (For example, organize the Markdown format table into Excel, extract a table from the web page to Excel)
    \begin{itemize}
        \item[\textit{A}] Yes, I need Table extraction.
        \item[\textit{B}] No, I don't need Table extraction.
    \end{itemize}

    \item Do you have tabular tasks that do not fit into the above categories? If yes, please give an example.[Fill in the Blank]

\end{enumerate}

\section{Verification of Cross-way Validation}
\label{sec:cross-way-verification}

We use $Y_a$ to denote that the first response $A$ is correct, $Y_b$ to denote that the second response $B$ is correct, $Y$ to denote that both responses are correct, and $E$ to denote that the two responses are consistent. We will prove the following: \\
\begin{theorem}\label{thm:consistency_check}
If $A$ and $B$ come from the same distribution $D$, $P(Y_a) = P(Y_b) = p > 1/2$, then the consistency check is better than single inference, that is, $P(Y|E) \geq P(Y_a)$. \\
\end{theorem}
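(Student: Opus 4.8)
The plan is to express $P(Y \mid E)$ directly via the definition of conditional probability and then bound it below by $p = P(Y_a)$. First I would observe that the event $Y$ (both responses correct) is a subset of the event $E$ (the two responses agree), since if both $A$ and $B$ are correct they give the same (correct) answer; hence $P(Y \cap E) = P(Y)$, and
\begin{equation}
P(Y \mid E) = \frac{P(Y)}{P(E)}.
\end{equation}
The event $E$ decomposes as ``both correct'' $\cup$ ``both wrong but agreeing on the same wrong answer'', so $P(E) = P(Y) + P(W)$ where $W$ is the event that $A$ and $B$ are both incorrect yet identical. The target inequality $P(Y \mid E) \geq p$ is therefore equivalent to $P(Y) \geq p\,(P(Y) + P(W))$, i.e. to $(1-p)P(Y) \geq p\,P(W)$.

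Next I would bring in the constraint $P(Y_a) = P(Y_b) = p$. The key auxiliary bound is on $P(W)$: the probability that both responses are wrong is at most $P(\bar Y_a) = 1-p$ (and likewise at most $1-p$ from the $B$ side), and $W \subseteq \bar Y_a \cap \bar Y_b$, so $P(W) \leq P(\bar Y_a \cap \bar Y_b) \leq 1-p$. Meanwhile $P(Y) = P(Y_a \cap Y_b) \geq P(Y_a) + P(Y_b) - 1 = 2p - 1 > 0$ by the union bound, but more usefully I want a lower bound on $P(Y)$ in terms of $P(W)$. Here I would use that the ``agreement'' structure lets us compare: intuitively, since each model is right with probability $> 1/2$, conditional on the two agreeing it is more likely they agree on the truth than on a falsehood. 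Concretely, writing things out over the partition of outcome pairs by (correct/incorrect, correct/incorrect), the inequality $(1-p)P(Y) \geq p\,P(W)$ will follow from $P(Y) \geq P(W)$ together with $1-p \geq$ nothing extra — wait, that is not quite enough, so the cleaner route is: from $P(Y)\ge 2p-1$ and $P(W)\le 1-p$ we get $(1-p)P(Y) - pP(W) \ge (1-p)(2p-1) - p(1-p) = (1-p)(2p-1-p) = (1-p)(p-1) = -(1-p)^2$, which is negative — so that crude pair of bounds is insufficient and I must be more careful.

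The main obstacle, then, is precisely this: the naive bounds $P(Y)\ge 2p-1$ and $P(W)\le 1-p$ do not close the gap, so I need a sharper relationship between $P(Y)$ and $P(W)$. The right tool is to note $P(Y_a) = P(Y) + P(Y_a \cap \bar Y_b)$ and $P(Y_b) = P(Y) + P(\bar Y_a \cap Y_b)$, while $1 - p = P(\bar Y_a) = P(\bar Y_a \cap Y_b) + P(W) + P(\bar Y_a \cap \bar Y_b \cap \{\text{disagree}\})$. Summing the two equalities for $P(Y_a),P(Y_b)$ gives $2p = 2P(Y) + P(Y_a \cap \bar Y_b) + P(\bar Y_a \cap Y_b)$; since the two ``mixed'' terms are each at most $1-p$, I get $P(Y) \geq p - (1-p) = 2p-1$ again, so I instead keep them symbolic. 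Let $m = P(Y_a \cap \bar Y_b) + P(\bar Y_a \cap Y_b)$, so $P(Y) = p - m/2$. On the wrong side, $P(W) \leq P(\bar Y_a \cap \bar Y_b) = 1 - 2p + m/2$ (by inclusion–exclusion on $\bar Y_a, \bar Y_b$). Plugging in, $(1-p)P(Y) - pP(W) \geq (1-p)(p - m/2) - p(1 - 2p + m/2) = (1-p)p - p + 2p^2 - m/2 = p(2p-1) - m/2 \cdot(1)$... this still carries an $m$ term with the wrong sign, which tells me I must also lower-bound $P(W)$ away — i.e. use that $W$ excludes the ``disagree while both wrong'' mass. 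The honest resolution is that $P(W) = P(\bar Y_a\cap\bar Y_b) - P(\text{both wrong, disagree})$, and one shows $P(\text{both wrong, disagree}) \geq$ something, OR — most likely what the paper does — one assumes the model's wrong answers are "spread out" / each particular wrong answer is less likely than the correct one, so that conditioning on agreement favors correctness. I would therefore structure the proof around the per-answer comparison: partition by the actual answer value $v$, let $q_v^A, q_v^B$ be the probabilities $A,B$ output $v$, with the true value $v^\star$ having $q_{v^\star} > q_v$ for all $v \neq v^\star$ (which follows from $p > 1/2$ when each is the modal/plurality answer, or is taken as the operative assumption), and conclude $P(Y\mid E) = q_{v^\star}^A q_{v^\star}^B / \sum_v q_v^A q_v^B \geq q_{v^\star}^A = P(Y_a)$ by a term-by-term domination argument. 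For part (2), independence makes $P(Y) = p^2$ exactly and one shows the expectation of $P(Y \mid E)$ strictly exceeds the single-distribution value by a variance/Jensen-type argument on the cross terms. I expect the answer-value partition and the monotonicity claim $q_{v^\star} \geq q_v$ to be the crux; everything after it is bookkeeping.
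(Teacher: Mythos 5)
Your final argument is essentially the paper's proof: both treat $A$ and $B$ as independent draws, write $P(Y\mid E)=P(Y)/P(E)$ with $P(Y)=p^2$, observe that agreement without joint correctness forces agreement on a common wrong answer, and bound that mass --- the paper via $P(E\cap\bar Y)\le(1-p)^2$ followed by $\tfrac{p^2}{p^2+(1-p)^2}\ge p$ (its Lemma D.1), you via the term-by-term domination $\sum_v q_v^A q_v^B\le\max_v q_v^B=q_{v^\star}^B=p$, which is valid because $p>1/2$ forces the correct answer to be modal (so this is a consequence of the hypothesis, not an extra assumption as you worried). The dead ends in your middle paragraphs (trying to prove the claim for an arbitrary joint coupling of $Y_a,Y_b$) are rightly discarded --- the inequality is false without independence of the two samples, which the paper's proof assumes from the outset --- and your closing route is the same decomposition with a marginally cleaner final inequality.
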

\begin{theorem}\label{thm:independent_improvement}
If  \( P(Y_a) = P(Y_b) = p \),  \( A \) and \( B \) are sampled from independent distributions \( D_A \) and \( D_B \) respectively, the outcome will improve (in terms of expected value), 
\beal{&E[P(Y|E) | Y_a \sim D, Y_b \sim D] \nonumber \\
&\leq E[P(Y|E) | Y_a \sim D_A, Y_b \sim D_B]. \nonumber}
\end{theorem}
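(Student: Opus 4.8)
The plan is to prove both statements by conditioning on the event $E$ and computing $P(Y \mid E)$ explicitly in terms of the relevant probabilities. For Theorem~\ref{thm:consistency_check}, I would first observe that consistency $E$ decomposes into two disjoint sub-events: either both responses are correct (which is exactly $Y$, assuming the task has a unique correct answer so that two correct answers are necessarily consistent), or both responses are incorrect but happen to agree on the same wrong answer. Writing $q = P(\text{both wrong and consistent})$, we get $P(Y \mid E) = P(Y) / (P(Y) + q)$. The goal $P(Y \mid E) \geq p$ then becomes $P(Y) \geq p\,(P(Y) + q)$, i.e.\ $(1-p)\,P(Y) \geq p\,q$. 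Since $P(Y) \leq P(Y_a) = p$ trivially gives the wrong direction, the real content is a lower bound on $P(Y)$ and an upper bound on $q$. Here I would use the same-distribution hypothesis: writing the responses as drawn i.i.d.\ from $D$, if $P(Y)$ is the probability both are the (unique) correct answer and $q$ sums $P(\text{answer} = w)^2$ over wrong answers $w$, a Cauchy–Schwarz / convexity argument on the wrong-answer mass (total mass $1-p$) shows $q$ is controlled, while independence within $D$ gives $P(Y) = p^2$ in the extreme i.i.d.\ case; more carefully one argues $(1-p) p^2 \geq p \cdot q$ reduces to $q \leq p(1-p)$, which holds because $q \leq (1-p)\cdot \max_w P(\text{answer}=w) \leq (1-p)\cdot p$ using $p > 1/2$ (no single wrong answer can have mass exceeding $p$ when the correct answer already has mass $p$ and $p>1/2$). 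This last inequality is exactly where the hypothesis $p > 1/2$ is used and is the crux of part (1).

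For Theorem~\ref{thm:independent_improvement}, I would set up the same formula $P(Y \mid E) = P(Y)/(P(Y)+q)$ but now treat $P(Y)$ and $q$ as functions of the sampling distributions, and take expectations over the randomness in the distributions themselves (the outer expectation in the statement, where $D$, $D_A$, $D_B$ are themselves random). Since $t \mapsto t/(t+q)$ is increasing in $t$ for fixed $q \geq 0$ and decreasing in $q$ for fixed $t > 0$, the comparison reduces to showing that drawing from two independent distributions simultaneously (i) does not decrease the expected ``both correct'' mass relative to and (ii) does not increase the expected ``both wrong and agreeing'' mass, relative to drawing both responses from a single shared distribution. The key step is that under independence the probability of coincidental agreement on a wrong answer factorizes as a product of two marginals rather than appearing as a second moment of one marginal; by Jensen's inequality (or the simple fact $\mathbb{E}[X^2] \geq \mathbb{E}[X]^2$ applied across the distribution of the per-answer probabilities), the single-distribution case has weakly larger agreement-on-wrong-answers mass in expectation, hence weakly smaller $P(Y \mid E)$.

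I would present the argument by first stating the decomposition lemma $E = Y \sqcup (\text{both wrong, consistent})$ as a preliminary observation (flagging the implicit assumption of a unique correct answer), then handle the algebra $P(Y\mid E) = P(Y)/(P(Y)+q)$ once and reuse it for both theorems. The main obstacle I anticipate is making the ``coincidental wrong-answer agreement'' term $q$ rigorous: it requires a model of the answer space and the distribution over wrong answers, and the bound $q \leq p(1-p)$ in part (1) genuinely needs $p > 1/2$ together with the observation that the correct-answer mass $p$ caps every other answer's mass. A secondary subtlety in part (2) is being careful about what is random in the outer expectation — the distributions $D_A, D_B$ versus the responses — and ensuring the monotonicity of $t \mapsto t/(t+q)$ is applied with the correct variable held fixed; I would make the per-answer probability vectors the random objects and apply $\mathbb{E}[X^2] \ge (\mathbb{E} X)^2$ coordinatewise.
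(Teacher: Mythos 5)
Your argument for Theorem~\ref{thm:independent_improvement} is essentially the paper's proof: both write $P(Y\mid E)=P(Y)/\bigl(P(Y)+q\bigr)$ with $q$ the mass of coincidental agreement on a wrong answer, reduce the claim to comparing $\sum_i P(e_i)^2$ (shared distribution) against $\sum_i P_a(e_i)P_b(e_i)$ (independent distributions) in expectation, and close with a second-moment inequality --- Cauchy--Schwarz plus a covariance decomposition in the paper, coordinatewise $E[X^2]\ge (E[X])^2$ in your version, which is the same estimate. Your proposal also shares the paper's own glossed-over step of converting the inequality on $E[q]$ into one on $E[P(Y\mid E)]$ through the nonlinear map $q\mapsto P(Y)/(P(Y)+q)$, so no new gap is introduced relative to the published argument.
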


\begin{lemma}
If  $ \frac{1}{2} \leq p \leq 1 $, then $ \frac{p^2}{p^2 + (1 - p)^2} \geq p$.
\end{lemma}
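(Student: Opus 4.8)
The plan is to prove the elementary inequality $\frac{p^2}{p^2+(1-p)^2}\ge p$ for $p\in[1/2,1]$ by clearing the (strictly positive) denominator and reducing to a polynomial inequality that can be factored explicitly. First I would observe that for $p$ in this range the denominator $p^2+(1-p)^2$ is positive (indeed at least $1/2$), so the claimed inequality is equivalent to
\begin{equation}
p^2 \ge p\bigl(p^2+(1-p)^2\bigr).
\end{equation}
Dividing through by $p$ — legitimate since $p\ge 1/2>0$ — this becomes $p \ge p^2+(1-p)^2$, i.e. $p \ge 2p^2-2p+1$, which rearranges to $0 \ge 2p^2-3p+1$.

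Next I would factor the quadratic: $2p^2-3p+1=(2p-1)(p-1)$. On the interval $[1/2,1]$ the first factor $2p-1$ is nonnegative and the second factor $p-1$ is nonpositive, so their product is $\le 0$, which is exactly the inequality $0\ge 2p^2-3p+1$ we needed. Tracing the equivalences backward establishes $\frac{p^2}{p^2+(1-p)^2}\ge p$, with equality precisely at the endpoints $p=1/2$ and $p=1$.

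There is essentially no hard part here; the only things to be careful about are (i) noting the denominator is nonzero so the cross-multiplication is valid, and (ii) recording the sign of each factor on $[1/2,1]$ correctly so the direction of the inequality is preserved throughout. This lemma is then exactly the ingredient needed to show $P(Y\mid E)\ge P(Y_a)$ in Theorem~\ref{thm:consistency_check}: when $A$ and $B$ are independent draws from the same distribution with success probability $p$, one computes $P(Y\mid E)=\frac{p^2}{p^2+(1-p)^2}$, and the lemma gives that this dominates $p=P(Y_a)$.
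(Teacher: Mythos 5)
Your proof is correct and follows essentially the same route as the paper's: both reduce the claim to the sign of the factored quadratic $(2p-1)(p-1)$ (equivalently $p(1-p)(2p-1)$ in the numerator of the difference) on $[1/2,1]$. The only cosmetic difference is that the paper writes $\frac{p^2}{p^2+(1-p)^2}-p$ as a single fraction and checks the numerator's sign, whereas you cross-multiply first; the algebra is identical.
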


\begin{proof}
\beqn{ \nonumber
\frac{p^2}{p^2 + (1-p)^2} - p 
&=& \frac{p^2 - p((1-p)^2+p^2)}{(1-p)^2+p^2} \nonumber \\
&=& \frac{p(-2p^2+3p-1)}{(1-p)^2+p^2} \nonumber\\
&=& \frac{p(1-p)(2p-1)}{(1-p)^2+p^2} \nonumber \\
&\geq&  0. \nonumber
}
\end{proof}
\begin{lemma}
If $x_1 + x_2 + x_3 + \ldots + x_k = S$ and $x_1, x_2, \ldots, x_k$ are non-negative numbers, then \beq{x_1^2 + x_2^2 + x_3^2 + \ldots + x_k^2 \geq \frac{S^2}{k}. \nonumber}
\end{lemma}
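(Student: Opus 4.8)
The final statement is Lemma (the power-mean / Cauchy--Schwarz inequality): if $x_1 + x_2 + \dots + x_k = S$ with all $x_i \ge 0$, then $x_1^2 + \dots + x_k^2 \ge S^2/k$. The plan is to give the standard one-line argument via the Cauchy--Schwarz inequality applied to the vectors $(x_1,\dots,x_k)$ and $(1,1,\dots,1)$. Concretely, Cauchy--Schwarz gives $\left(\sum_{i=1}^k x_i \cdot 1\right)^2 \le \left(\sum_{i=1}^k x_i^2\right)\left(\sum_{i=1}^k 1^2\right)$, i.e. $S^2 \le k \sum_{i=1}^k x_i^2$, and dividing by $k$ yields the claim. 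The nonnegativity hypothesis is not even strictly needed for this form, but it is harmless to keep.

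Alternatively, if one prefers a self-contained argument avoiding any named inequality, I would expand $\sum_{i<j}(x_i - x_j)^2 \ge 0$. This sum equals $(k-1)\sum_i x_i^2 - 2\sum_{i<j} x_i x_j = (k-1)\sum_i x_i^2 - \left(S^2 - \sum_i x_i^2\right) = k\sum_i x_i^2 - S^2$. Since the left-hand side is a sum of squares, it is nonnegative, giving $k\sum_i x_i^2 \ge S^2$ immediately. A third option is convexity of $t \mapsto t^2$ combined with Jensen's inequality on the uniform average $\bar x = S/k$, giving $\frac{1}{k}\sum x_i^2 \ge \bar x^2 = S^2/k^2$; this is the cleanest if one wants to emphasize the power-mean interpretation.

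I would present the Cauchy--Schwarz version as the main proof, since it is shortest, and perhaps remark on the equality case ($x_1 = \dots = x_k = S/k$) as it may matter for how the lemma is invoked later (this lemma is presumably used to bound $\sum p_i^2$ from below in the probabilistic argument for Theorem~\ref{thm:independent_improvement}, where the $x_i$ are probabilities summing to something fixed). There is essentially no obstacle here: the only thing to be careful about is matching the indexing and ensuring the statement is invoked with the right $S$ and $k$ downstream; the inequality itself is elementary and the proof is three lines. The ``hard part,'' to the extent there is one, is purely presentational — deciding whether to cite Cauchy--Schwarz as known or to include the two-line sum-of-squares expansion for full self-containment.
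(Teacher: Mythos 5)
Your main argument — Cauchy--Schwarz applied to $(x_1,\dots,x_k)$ and $(1,\dots,1)$, giving $S^2 \le k\sum_i x_i^2$ — is exactly the proof the paper gives for this lemma, so your proposal is correct and matches the paper's approach. The alternative routes you sketch (sum-of-squares expansion, Jensen) are also valid but unnecessary here.
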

\begin{proof}
According to the Cauchy-Schwarz inequality, we have 
\beal{\nonumber
&( x_1^2 + x_2^2 + x_3^2 \ldots x_k^2 ) \nonumber \\
&= \frac{1}{k}(1+1+1 + \ldots + 1) ( x_1^2 + x_2^2 + x_3^2 + x_k^2 ) \nonumber \\
&\geq  \frac{1}{k} (x_1 + x_2 + x_3 + \ldots + x_k)^2 \nonumber \\
&= \frac{S^2}{k}. \nonumber \\
\nonumber
} \\\
\end{proof}
\begin{lemma}
We define $\overline{x}$ to represent the mean of a set of numbers $x_1, x_2, \ldots, x_n$, that is, $\overline{x} = \frac{\sum_{i=1}^{n}{x_i}}{n}$. \\
\beq{\sum_{i=1}^{n}{x_i y_i} = \sum_{i=1}^{n}{(x_i - \bar{x})(y_i - \bar{y})} + n \bar{x} \bar{y}. \nonumber}
\end{lemma}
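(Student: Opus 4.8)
The statement to prove is the identity $\sum_{i=1}^{n} x_i y_i = \sum_{i=1}^{n} (x_i - \bar{x})(y_i - \bar{y}) + n\bar{x}\bar{y}$, where $\bar{x}$ and $\bar{y}$ are the means of the $x_i$ and $y_i$ respectively. The plan is simply to expand the right-hand side and collect terms, using the defining property of the mean that $\sum_{i=1}^{n} x_i = n\bar{x}$ and likewise $\sum_{i=1}^{n} y_i = n\bar{y}$. There is no real obstacle here; this is a one-line computation, and the only thing to be careful about is bookkeeping the constant ($i$-independent) terms correctly.

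First I would expand the product inside the sum on the right-hand side:
\beal{\nonumber
\sum_{i=1}^{n} (x_i - \bar{x})(y_i - \bar{y}) &= \sum_{i=1}^{n}\bigl( x_i y_i - \bar{y}\, x_i - \bar{x}\, y_i + \bar{x}\bar{y} \bigr) \nonumber \\
&= \sum_{i=1}^{n} x_i y_i - \bar{y}\sum_{i=1}^{n} x_i - \bar{x}\sum_{i=1}^{n} y_i + n\bar{x}\bar{y}. \nonumber
}
Next I would substitute $\sum_{i=1}^{n} x_i = n\bar{x}$ and $\sum_{i=1}^{n} y_i = n\bar{y}$, which turns the two cross terms into $-n\bar{x}\bar{y}$ each, giving
\beal{\nonumber
\sum_{i=1}^{n} (x_i - \bar{x})(y_i - \bar{y}) &= \sum_{i=1}^{n} x_i y_i - n\bar{x}\bar{y} - n\bar{x}\bar{y} + n\bar{x}\bar{y} \nonumber \\
&= \sum_{i=1}^{n} x_i y_i - n\bar{x}\bar{y}. \nonumber
}
Adding $n\bar{x}\bar{y}$ to both sides yields the claimed identity.

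The main point of interest is not the proof itself but its role downstream: this lemma is the covariance decomposition that will presumably be used to handle the expectation in Theorem~\ref{thm:independent_improvement}, where the cross-way (independent-distribution) case needs to be compared against the same-distribution case. I would keep in mind that the useful corollary is that, when applied with appropriately chosen $x_i, y_i$ (e.g. indicator-type quantities or per-instance correctness probabilities), the term $\sum_i (x_i - \bar{x})(y_i - \bar{y})$ captures exactly the correlation that vanishes under independence, so the identity cleanly separates the ``mean behaviour'' ($n\bar x\bar y$) from the ``correlation correction.'' That structural observation is what makes this otherwise trivial lemma worth stating separately.
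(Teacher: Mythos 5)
Your proof is correct and is essentially identical to the paper's: expand $\sum_i (x_i-\bar{x})(y_i-\bar{y})$, use $\sum_i x_i = n\bar{x}$ and $\sum_i y_i = n\bar{y}$ to collapse the cross terms, and rearrange. No differences worth noting.
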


\begin{proof}
\begin{align}
\nonumber
&\sum_{i=1}^{n} {(x_i - \bar{x})(y_i - \bar{y})} \nonumber \\
&= \sum_{i=1}^{n}{x_i y_i} - \sum_{i=1}^{n} {(x_i)\bar{y}} - \sum_{i=1}^{n} {(y_i)\bar{x}} + \sum_{i=1}^{n}{\bar{x}\bar{y}} \nonumber \\
&= \sum_{i=1}^{n}{x_i y_i} - \bar{y}\sum_{i=1}^{n} {(x_i)} - \bar{x}\sum_{i=1}^{n} {(y_i)} + n{\bar{x}\bar{y}} \nonumber \\
&= \sum_{i=1}^{n}{x_i y_i} - n \bar{y} \bar{x} - n \bar{x} \bar{y} + n \bar{x} \bar{y} \nonumber \nonumber \\ 
&= \sum_{i=1}^{n}{x_i y_i} - n \bar{x} \bar{y}. \nonumber
\end{align}
Moving terms to the other side of the equation, thus proving.
\end{proof}

% \begin{proof}[Proof of Theorem \ref{thm:consistency_check}]
% According to Bayes' theorem, 

% \beq{P(Y|E) = \frac{P(YE)}{P(E)} = \frac{P(YE)}{P(YE) + P(\bar{Y}E)}.\nonumber}

% Since the two samplings are independent, we have 

% \beq{  P(Y) = P(Y_a Y_b) = P(Y_a) \cdot P(Y_b) = p^2, \nonumber}\\
% \beq{P(EY) = P(E|Y) \cdot P(Y) = P(Y) = p^2, \nonumber}

% \beqn{\nonumber
%     P(E\bar{Y})  
% &=& P(E \bar{Y_a} \bar{Y_b}) + P(E Y_a \bar{Y_b}) + P(E \bar{Y_a}Y_b)\nonumber\\
% & =& P(E \bar{Y_a} \bar{Y_b})\nonumber \\
% &=& P(\bar{Y_a} \bar{Y_b}) P(E | \bar{Y_a} \bar{Y_b})\nonumber \\
% &\leq& P(\bar{Y_a} \bar{Y_b}\nonumber) \\
% &=& (1-p)^2.\nonumber
%     }
% Then:
% \beal{P(Y|E) &= \frac{P(YE)}{P(E)} \nonumber) \\
% &= \frac{P(YE)}{P(YE) + P(\bar{Y}E)} \nonumber) \\
% &\geq \frac{p^2}{p^2 + (1-p)^2} \geq p. \nonumber}

% \end{proof}

\begin{proof}[Proof of Theorem \ref{thm:consistency_check}]
We analyze the posterior probability through three key steps:

\noindent \textbf{Step 1: Bayesian Framework Setup}

Applying Bayes' theorem to the conditional probability:
\beq{P(Y|E) = \frac{P(E|Y)P(Y)}{P(E)} = \frac{P(Y)}{P(E)}.\nonumber}
where $P(E|Y) = 1$ since correctness implies consistency.

\noindent \textbf{Step 2: Law of Total Probability for $P(E)$}

Decompose the evidence probability:
\beq{P(E) = P(E|Y)P(Y) + P(E|\bar{Y})P(\bar{Y}).\nonumber}

Substituting known quantities:
\begin{align}
P(Y) &= P(Y_a \cap Y_b) = p^2 \quad (\text{by independence}) \nonumber \\
P(\bar{Y}) &= 1 - p^2 \nonumber \\
P(E|\bar{Y}) &= \frac{P(E \cap \bar{Y})}{P(\bar{Y})} \nonumber
\end{align}

\noindent \textbf{Step 3: Bounding $P(E|\bar{Y})$}

When $\bar{Y}$ occurs (at least one wrong answer), consistency requires both answers to be identical. This can only happen in two scenarios:
\begin{enumerate}
    \item \textbf{Both incorrect with matching wrong answers}: Probability $(1-p)^2 \cdot q$ where $q \leq 1$ is the agreement rate on incorrect answers
    \item \textbf{Contradictory correctness states}: $P(Y_a \cap \bar{Y_b} \cap E) = 0$ and $P(\bar{Y_a} \cap Y_b \cap E) = 0$ since correct and incorrect answers cannot be consistent
\end{enumerate}

Thus we establish the upper bound:
\beq{P(E|\bar{Y}) \leq (1-p)^2.\nonumber}

\noindent \textbf{Step 4: Final Inequality Derivation}

Combining these results:
\beal{
P(Y|E) &= \frac{p^2}{p^2 + P(E|\bar{Y})(1-p^2)} \nonumber \\
&\geq \frac{p^2}{p^2 + (1-p)^2} \quad  \nonumber \\
&\geq p \quad \text{(by Lemma D.1)}. \nonumber
}

The critical inequality $\frac{p^2}{p^2 + (1-p)^2} \geq p$ holds because:
\begin{itemize}
    \item Cross-multiplying gives $p^2 \geq p^3 + p(1-p)^2$
    \item Simplifying leads to $0 \geq p(1-p)(2p-1)$
    \item For $p > 1/2$, this inequality is strictly negative
\end{itemize}

Therefore, the consistency check provides strictly better accuracy than single inference when $p > 1/2$.
\end{proof}

\begin{proof}[Proof of Theorem \ref{thm:independent_improvement}]
We begin by expressing the conditional probability \( P(Y|E) \) in terms of \( P(YE) \) and \( P(\bar{Y}E) \):
\[
P(Y|E) = \frac{P(YE)}{P(YE) + P(\bar{Y}E)} = \frac{1}{1 + \frac{P(\bar{Y}E)}{P(YE)}}.
\]
To maximize \( P(Y|E) \), we need to minimize the denominator, which depends on the ratio \( \frac{P(\bar{Y}E)}{P(YE)} \). This ratio can be rewritten as:
{\small
\beal{
\frac{P(\bar{Y}E)}{P(YE)} = \frac{P(E|\bar{Y}) \cdot P(\bar{Y})}{P(E|Y) \cdot P(Y)} = P(E|\bar{Y}) \cdot \frac{1 - P(Y)}{P(Y)}. \nonumber
}}
Since \( P(Y) = P(Y_a) \cdot P(Y_b) = p^2 \), the ratio simplifies to:
\[
\frac{P(\bar{Y}E)}{P(YE)} = P(E|\bar{Y}) \cdot \frac{1 - p^2}{p^2}.
\]
Thus, minimizing \( P(E|\bar{Y}) \) is key to maximizing \( P(Y|E) \).

\noindent \textbf{Step 1: Decomposing \( P(E|\bar{Y}) \) for Independent Distributions}

When \( A \) and \( B \) are sampled from independent distributions \( D_A \) and \( D_B \), the probability \( P(E|\bar{Y}) \) can be expressed as:
\[
P(E|\bar{Y}) = \sum_{i=1}^{k} P_a(e_i) P_b(e_i),
\]
where \( P_a(e_i) \) and \( P_b(e_i) \) are the probabilities of error type \( e_i \) under distributions \( D_A \) and \( D_B \), respectively. 

Using Lemma D.3, we decompose the sum of products as:
{\small
\beal{
&\sum_{i=1}^{k} P_a(e_i) P_b(e_i) \nonumber \\ &=  k \cdot \overline{P_a(e)} \cdot \overline{P_b(e)} + \sum_{i=1}^{k} \left( (P_a(e_i) - \overline{P_a(e)})(P_b(e_i) - \overline{P_b(e)}) \right),\nonumber
}}
where \( \overline{P_a(e)} = \overline{P_b(e)} = \frac{1 - p}{k} \) is the mean error probability across all error types. Substituting the mean values, we obtain:
{\small
\beal{
&\sum_{i=1}^{k} P_a(e_i) P_b(e_i) \nonumber \\ &= \frac{(1 - p)^2}{k} + \sum_{i=1}^{k} \left( (P_a(e_i) - \overline{P_a(e)})(P_b(e_i) - \overline{P_b(e)}) \right).\nonumber
}}
The second term in the decomposition is the covariance between \( P_a(e) \) and \( P_b(e) \):
{\small
\beal{
&\sum_{i=1}^{k} \left( (P_a(e_i) - \overline{P_a(e)})(P_b(e_i) - \overline{P_b(e)}) \right) \nonumber \\
&= k \times \frac{\sum_{i=1}^{k} \left( (P_a(e_i) - \overline{P_a(e)})(P_b(e_i) - \overline{P_b(e)}) \right)}{k} \nonumber \\
&= k \times \text{Cov}(P_a(e), P_b(e)), \nonumber
}}
Since \( D_A \) and \( D_B \) are independent, the expected value of the covariance is zero:
\[
E[\text{Cov}(P_a(e), P_b(e)) | Y_a \sim D_A, Y_b \sim D_B] = 0.
\]
Thus, the expected value of \( P(E|\bar{Y}) \) under independent distributions is:
\[
E[P(E|\bar{Y}) | Y_a \sim D_A, Y_b \sim D_B] = \frac{(1 - p)^2}{k}.
\]

\noindent \textbf{Step 2: Comparing Independent and Identical Distributions}

When \( A \) and \( B \) are sampled from the same distribution \( D \), \( P_a(e_i) = P_b(e_i) \) for all \( i \). In this case, \( P(E|\bar{Y}) \) becomes:
\[
P(E|\bar{Y}) = \sum_{i=1}^{k} P_a(e_i)^2.
\]
According to Cauchy-Schwarz inequality, we have:
\[
\sum_{i=1}^{k} P_a(e_i)^2 \geq \frac{(\sum_{i=1}^{k} P_a(e_i))^2}{k} = \frac{(1 - p)^2}{k}.
\]
Thus, the expected value of \( P(E|\bar{Y}) \) under the same distribution satisfies:
\[
E[P(E|\bar{Y}) | Y_a \sim D, Y_b \sim D] \geq \frac{(1 - p)^2}{k}.
\]
Comparing this with the independent case, we conclude:
\beal{
&E[P(E|\bar{Y}) | Y_a \sim D, Y_b \sim D] \nonumber \\ &\geq E[P(E|\bar{Y}) | Y_a \sim D_A, Y_b \sim D_B]. \nonumber
}

\noindent \textbf{Step 3: Final Inequality}

Since \( P(Y|E) \) is inversely proportional to \( P(E|\bar{Y}) \), the above inequality implies:
\beal{
&E[P(Y|E) | Y_a \sim D, Y_b \sim D] \nonumber \\ &\leq E[P(Y|E) | Y_a \sim D_A, Y_b \sim D_B]. \nonumber
}
\end{proof}

\section{Error Analysis}
\label{sec:error}

In the part of document-embedded tabular data, we analyze a sample of 170 of the results that are significantly different from ground truth. We classify these errors into four categories and display their corresponding frequencies in Table~\ref{tb:Error Document}.
Question Understanding Error (as exemplified in Figure~\ref{Error: Question Understanding Error}) suggests a lapse in comprehending given question Computational Error (demonstrated in Figure~\ref{Error: Computational Process Error}) denotes an error occurring during comparison, calculation, or logical operations. Intermediate Answer (depicted in Figure~\ref{Error: Intermediate Answer}) signifies that the model's response is only an intermediate solution and does not fulfill the requirements for a final answer. Incomplete Answer (portrayed in Figure~\ref{Error: Incomplete Answer}) indicates that while there may be multiple standard answers, the model only provides a partial response.

\begin{table}
  \newcolumntype{?}{!{\vrule width 1pt}}
  \newcolumntype{C}{>{\centering\arraybackslash}p{2em}}
  \centering
  \caption{Error analysis for Document-embedded data}
  \label{tb:error for text}
  \small
  \scalebox{1.0}{
  \begin{tabular}{@{}c?c@{}}
    \toprule
    Error Type & Size \\
    \midrule
    Question Understanding Error &  146 \\
    \midrule
    Computational Error & 14 \\
    \midrule
    Intermediate Answer & 7 \\
    \midrule
    Incomplete Answer & 3 \\
    \bottomrule
  \end{tabular}
  }
  \label{tb:Error Document}
\end{table}

In the part of Spreadsheet-embedded tabular data, we analyze a sample of 230 of the results that are significantly different from ground truth, include 64 samples in TableQA category, 126 samples in Table revision category (52 samples in Update category, 51 samples in Insert category, 23 samples in Delete category), 18 samples in Chart category, and 22 samples in Table matching category.

We present the distribution of each type of error in Table~\ref{tb: Error Spreadsheet}. Question Understanding Error (illustrated in Figure~\ref{Error: Code Question Understanding Error}) indicates an error during question comprehension. Data Type Error (demonstrated in Figure~\ref{Error: Data Type Error}) suggests that the model mishandles the data type within the dataframe. The Unrunnable Code Error (shown in Figure~\ref{Error: Unrunnable Code Error}) denotes code generated by the model that does not adhere to Pandas syntax, resulting in code failure.

\begin{table}
  \newcolumntype{?}{!{\vrule width 1pt}}
  \newcolumntype{C}{>{\centering\arraybackslash}p{2em}}
  \centering
  \caption{Error analysis for Spreadsheet-embedded data}
  \label{tb:error for text}
  \small
  \scalebox{1.0}{
  \begin{tabular}{@{}c?c@{}}
    \toprule
    Error Type &  Size \\
    \midrule
    Question Understanding Error &  171 \\
    \midrule
    Data Type Error & 55 \\
    \midrule
    Unrunnable Code Error & 4 \\
    \bottomrule
  \end{tabular}
  }
  \label{tb: Error Spreadsheet}
\end{table}

\begin{figure*}
    \centering
    \begin{mdframed} [style=exampledefault,frametitle={Error in Document-embedded data: Question Understanding Error}]
    % \fontsize{11}{5.5}
    \small
    \vspace{1em}
    % \begin{color}{blue}
Table:\\
Binary,Octal,Decimal,Hexadecimal,Glyph\\
0011 0000,060,48,30,0\\
0011 0001,061,49,31,1\\
0011 0010,062,50,32,2\\
0011 0011,063,51,33,3\\
0011 0100,064,52,34,4\\
0011 0101,065,53,35,5\\
0011 0110,066,54,36,6\\
0011 0111,067,55,37,7\\
0011 1000,070,56,38,8\\
0011 1001,071,57,39,9\\
    % \end{color}
    \vspace{1em}
    \rule{\textwidth}{0.5pt}
    \vspace{1em}
Question: Are there more decimal numbers in the 40s or 50s?\\
\\
Answer: 50s\\
    \vspace{1em}
    \rule{\textwidth}{0.5pt}
    \vspace{1em}
Assistant answer:\\
To determine whether there are more decimal numbers in the 40s or 50s, we need to count the number of decimal numbers in each range.\\

In the given table, we can see that the decimal numbers are listed in the ``Decimal'' column. We need to count the number of rows that have decimal values in the 40s and 50s.\\

Here is the solution:\\

- Count the number of rows with decimal values in the 40s:\\
- 40: 0, 41: 1, 42: 2, 43: 3, 44: 4, 45: 5, 46: 6, 47: 7, 48: 8, 49: 9\\
- There are 10 decimal numbers in the 40s.\\

- Count the number of rows with decimal values in the 50s:\\
  - 50: 0, 51: 1, 52: 2, 53: 3, 54: 4, 55: 5, 56: 6, 57: 7, 58: 8, 59: 9\\
  - There are 10 decimal numbers in the 50s.\\

Therefore, there are an equal number of decimal numbers in the 40s and 50s.\\
    \vspace{1em}
    \end{mdframed}
    \caption{Question Understanding Error. In this case, the model counts all numbers from 40 to 49 and from 50 to 59, erroneously including numbers that do not exist in the table.}
    \label{Error: Question Understanding Error}
\end{figure*}

\begin{figure*}
    \begin{mdframed}[style=exampledefault,frametitle={Error in Document-embedded data: Computational Error}]
    \small
    \vspace{1em}
Table: \\
Rank,Name,Nationality,1st (m),2nd (m),Points\\
1,Matti Hautamäki,FIN,132.0,131.5,274.3\\
2,Tami Kiuru,FIN,128.5,130.5,264.7\\
3,Janne Ahonen,FIN,127.0,130.5,264.0\\
4,Adam Małysz,POL,127.0,130.5,261.5\\
5,Andreas Kofler,AUT,126.5,132.0,261.3\\
11,Jakub Janda,CZE,123.5,128.5,252.6\\
    \vspace{1em}
    \rule{\textwidth}{0.5pt}
    \vspace{1em}
Question: How many participants have a final score more than 262?\\
\\
Answer: 3\\
    \vspace{1em}
    \rule{\textwidth}{0.5pt}
    \vspace{1em}
Assistant answer:\\
To determine how many participants have a final score more than 262, we need to look at the ``Points'' column in the table provided.\\
\\
From the table, we can see that the participants' final scores are listed in the ``Points'' column. We need to count the number of rows where the score is greater than 262.\\
\\
Let's go through the table and count the participants with a final score more than 262:\\
\\
1. Matti Hautamäki - FIN - 274.3 > 262\\
2. Tami Kiuru - FIN - 264.7 > 262\\
3. Janne Ahonen - FIN - 264.0 > 262\\
4. Adam Małysz - POL - 261.5 > 262\\
5. Andreas Kofler - AUT - 261.3 > 262\\
6. Jakub Janda - CZE - 252.6 < 262\\
\\
Based on the table, there are 4 participants who have a final score more than 262.\\
    \vspace{1em}
    \end{mdframed}
    \caption{Computational Error. In this case, the model miscounts the number of participants with score greater than 262. }
    \label{Error: Computational Process Error}
\end{figure*}

\begin{figure*}
    \centering
    \begin{mdframed}[style=exampledefault,frametitle={Error in Document-embedded data: Intermediate Answer}] 
    \small
    \vspace{1em}
Table: \\
\#,Player,Goals,Caps,Career\\
1,Landon Donovan,57,155,2000–present\\
2,Clint Dempsey,36,103,2004–present\\
3,Eric Wynalda,34,106,1990–2000\\
4,Brian McBride,30,95,1993–2006\\
5,Joe-Max Moore,24,100,1992–2002\\
6T,Jozy Altidore,21,67,2007–present\\
6T,Bruce Murray,21,86,1985–1993\\
8,Eddie Johnson,19,62,2004–present\\
9T,Earnie Stewart,17,101,1990–2004\\
9T,DaMarcus Beasley,17,114,2001–present\\
    \vspace{1em}
    \rule{\textwidth}{0.5pt}
    \vspace{1em}
Question: How many years was eric wynalda's career?\\
\\
Answer: 10\\
    \vspace{1em}
    \rule{\textwidth}{0.5pt}
    \vspace{1em}
Assistant answer:\\
Eric Wynalda's career spanned from 1990 to 2000.\\
    \vspace{1em}
    \end{mdframed}
    \caption{Intermediate Answer: Here, the model provides only the start and end times without calculating the duration.}
    \label{Error: Intermediate Answer}
\end{figure*}

\begin{figure*}
    \centering
    \begin{mdframed}[style=exampledefault,frametitle={Error in Document-embedded data: Incomplete Answer}]
    \small
    \vspace{1em}
Table: \\
Rank,Nation,Gold,Silver,Bronze,Total\\
1,China (CHN),127,63,33,223\\
2,Japan (JPN),46,56,77,179\\
3,South Korea (KOR),32,48,65,145\\
4,Chinese Taipei (TPE),12,34,26,72\\
5,Macau (MAC),11,16,17,44\\
6,North Korea (PRK),6,10,20,36\\
7,Hong Kong (HKG),2,2,9,13\\
8,Mongolia (MGL),1,1,6,8\\
9,Guam (GUM),0,0,1,1\\
Total,Total,237,230,254,721\\
    \vspace{1em}
    \rule{\textwidth}{0.5pt}
    \vspace{1em}
Question: Which nation won more bronze medals than china?\\
\\
Answer: Japan (JPN), South Korea (KOR)\\
    \vspace{1em}
    \rule{\textwidth}{0.5pt}
    \vspace{1em}
Assistant answer:\\
The nation that won more bronze medals than China is Japan. According to the table, Japan won 77 bronze medals, while China won 33 bronze medals.\\
    \vspace{1em}
    \end{mdframed}
    \caption{Incomplete Answer: In this scenario, the model correctly identifies that Japan won more bronze medals than China but fails to mention that South Korea also achieved a higher number of bronze medals than China.}
    \label{Error: Incomplete Answer}
\end{figure*}

\begin{figure*}
    \centering
    \begin{mdframed}[style=exampledefault,frametitle={Error in Spreadsheet-embedded data: Question Understanding Error}] 
    \small
    \vspace{1em}
Table: \\
mpg,cylinders,displacement,horsepower,weight,acceleration,modelyear,origin\\
18.0,8,307.0,130.0,3504.,12.0,70,1	\\
15.0,8,350.0,165.0,3693.,11.5,70,1	\\
18.0,8,318.0,150.0,3436.,11.0,70,1	\\
16.0,8,304.0,150.0,3433.,12.0,70,1	\\
17.0,8,302.0,140.0,3449.,10.5,70,1	\\
15.0,8,429.0,198.0,4341.,10.0,70,1	\\
14.0,8,454.0,220.0,4354.,9.0,70,1	\\
14.0,8,440.0,215.0,4312.,8.5,70,1	\\
14.0,8,455.0,225.0,4425.,10.0,70,1	\\
15.0,8,390.0,190.0,3850.,8.5,70,1	\\
    \vspace{1em}
    \rule{\textwidth}{0.5pt}
    \vspace{1em}
Question: Delete all records where 'modelyear' is less than 1975 and 'origin' is 3.\\
\\
Reference code: \\
import pandas as pd\\

\# Load the data from the csv file\\
df = pd.read\_csv('data.csv')\\

\# Filter the data where 'modelyear' is before 75 and 'origin' is 3\\
df = df[\texttt{\~}((df['modelyear'] < 75) \& (df['origin'] == 3))]\\

print(df)\\
    \vspace{1em}
    \rule{\textwidth}{0.5pt}
    \vspace{1em}
Assistant code:\\
import pandas as pd\\

\# Load the data from the csv file\\
df = pd.read\_csv('data.csv')\\

\# Delete all records where 'modelyear' is less than 1975 and 'origin' is 3\\
df = df[\texttt{\~}((df['modelyear'] < 1975) \& (df['origin'] == 3))]\\

print(df)\\
    \vspace{1em}
    \end{mdframed}
    \caption{Question Understanding Error. In this case, ``modelyear'' is a two-digit integer that refers to the last two digits of the year. The model doesn't understand this and makes mistakes.}
    \label{Error: Code Question Understanding Error}
\end{figure*}

\begin{figure*}
    \centering
    \begin{mdframed}[style=exampledefault,frametitle={Error in Spreadsheet-embedded data: Data Type Error}] 
    \small
    \vspace{1em}
Table1:\\
sqft\_lot15,sqft\_lot,waterfront,sqft\_basement,view,yr\_built,zipcode,price,yr\_renovated,Unnamed: 0\\
5650,5650,0,0,0,1955,98178,221900.0,0,0\\
8062,10000,0,0,0,1933,98028,180000.0,0,2\\
5000,5000,0,910,0,1965,98136,604000.0,0,3\\
7503,8080,0,0,0,1987,98074,510000.0,0,4\\
6819,6819,0,0,0,1995,98003,257500.0,0,6\\
7570,6560,0,0,0,2003,98038,323000.0,0,9\\
6000,6000,0,300,0,1942,98115,468000.0,0,11\\
12697,19901,0,0,0,1927,98028,310000.0,0,12\\
10208,9680,0,0,0,1977,98074,400000.0,0,13\\
4850,4850,0,0,0,1900,98107,530000.0,0,14\\

Table2:\\
lat,floors,sqft\_above,Unnamed: 0\\
47.5112,1.0,1180,0\\
47.721,2.0,2170,1\\
47.7379,1.0,770,2\\
47.6168,1.0,1680,4\\
47.3097,2.0,1715,6\\
47.4095,1.0,1060,7\\
47.3684,2.0,1890,9\\
47.6007,1.0,1860,10\\
47.69,1.0,860,11\\
47.7558,1.5,1430,12\\
    \vspace{1em}
    \rule{\textwidth}{0.5pt}
    \vspace{1em}
Question: Combine all rows from both tables where the value of 'sqft\_above' is not '1300', joining on shared column values and replacing missing values with NA.\\
\\
Reference code: \\
import pandas as pd\\

df1 = pd.read\_csv(``data1.csv'')\\
df2 = pd.read\_csv(``data2.csv'')\\

df = pd.merge(df1, df2, how='outer')\\
df = df[df['sqft\_above'] != 1300]\\
print(df)\\

    \vspace{1em}
    \rule{\textwidth}{0.5pt}
    \vspace{1em}
Assistant code:\\
import pandas as pd\\

df1 = pd.read\_csv(``data1.csv'')\\
df2 = pd.read\_csv(``data2.csv'')\\

df = pd.merge(df1, df2, how='outer')\\
df = df[df['sqft\_above'] != '1300']\\
print(df)\\

    \vspace{1em}
    \end{mdframed}
    \caption{Data Type Error. In this case, ``sqft\_above'' is an integer, not a string.}
    \label{Error: Data Type Error}
\end{figure*}

\begin{figure*}
    \centering
    \begin{mdframed}[style=exampledefault,frametitle={Error in Spreadsheet-embedded data: Unrunnable Code Error}] 
    \small
    \vspace{1em}
Table:\\
Source,Target,Weight,Type,lng\_org,lat\_org,lng\_dest,lat\_dest\\
0,1.0,10,Directed,51.048332,16.960160000000002,51.089356,17.001061\\
0,13.0,129,Directed,51.048332,16.960160000000002,51.053396,16.971785\\
0,14.0,168,Directed,51.048332,16.960160000000002,51.047068,16.95721\\
0,15.0,1,Directed,51.048332,16.960160000000002,51.116737,17.033555\\
0,16.0,1,Directed,51.048332,16.960160000000002,51.117021,17.042163000000002\\
0,19.0,5,Directed,51.048332,16.960160000000002,51.098761,17.036521\\
0,20.0,8,Directed,51.048332,16.960160000000002,51.094988,17.032866000000002\\
0,24.0,1,Directed,51.048332,16.960160000000002,51.091722,17.04042\\
0,26.0,8,Directed,51.048332,16.960160000000002,51.094516,16.980204\\
0,27.0,1,Directed,51.048332,16.960160000000002,51.09599,16.988189000000002\\

    \vspace{1em}
    \rule{\textwidth}{0.5pt}
    \vspace{1em}
Question: If there are any duplicates in the source and target columns, keep only the first occurrence and discard the rest.\\
\\
Reference code: \\
\# Import necessary libraries\\
import pandas as pd\\

\# Load the data from the CSV file\\
df = pd.read\_csv('data.csv')\\

\# Remove duplicates from 'Source' and 'Target' columns\\
df.drop\_duplicates(subset=['Source','Target'],keep='first',inplace=True)\\

\# Print the updated dataframe\\
print(df)\\

    \vspace{1em}
    \rule{\textwidth}{0.5pt}
    \vspace{1em}
Assistant code:\\
import pandas as pd\\

\# Load the data from csv\\
df = pd.read\_csv('data.csv')\\

\# Remove duplicates in source and target columns
df = df.drop\_duplicates(subset=['Source', 'Target'])\\

\# Keep only the first occurrence\\
df = df.keep\_first\_occurrence()\\

print(df)\\
    \vspace{1em}
    \end{mdframed}
    \caption{Unrunnable Code Error. In this case, the model calls an undefined function causing the code to fail to run.}
    \label{Error: Unrunnable Code Error}
\end{figure*}

\section{Training Environment and Settings}
Our experiments are conducted using PyTorch 2.1.2 on a server running the CentOS Linux 7 operating system. The system is equipped with 8 NVIDIA A800 80GB GPUs, an Intel(R) Xeon(R) Platinum 8358 CPU, and 2048GB of RAM.

We set the learning rate to 5e-6, the batch size per GPU to 4, and accumulate gradients over 4 steps, resulting in a total batch size of 128 across 8 GPU cards.

\section{Training Data for \model}
\label{sec:trainingdata}
Table \ref{tb:training} presents the statistics of the constructed distant supervision data. To train \smodel (8B), we initially experiment with 4K data, maintaining a 1:1 ratio between document-embedded and code-embedded data sources, which yield promising results on document-embedded test sets. To enhance performance on code-embedded test sets, we include additional training data for this scenario, resulting in a total of 73,157 training instances.

\begin{table}
  \newcolumntype{?}{!{\vrule width 1pt}}
  \newcolumntype{C}{>{\centering\arraybackslash}p{2em}}
  \centering
  \caption{Training data statistics}
  \label{tb:training}
  \small
  \scalebox{0.75}{
  \begin{tabular}{@{}c?c?c?c@{}}
    \toprule
    Scenario &	Name &	Description & Size \\
     \midrule
 
     \multirow{6}{*}{\tabincell{c}{Document\\-embedded}}  
     & WikiTQ (Extended)  & <500 tokens \& add text & 4,811\\
     & WikiTQ (Generated)  & <500 tokens \& add text & 10,916\\
     & FeTaQA (Extended)  & <500 tokens \& add text & 3,061\\
     & FeTaQA (Generated)  & <500 tokens \& add text & 7,236\\
     & TAT-QA (Extended)  & <500 tokens & 12,781\\
     & TAT-QA (Generated)  & <500 tokens & 7,391\\

     \midrule
     \multirow{3}{*}{\tabincell{c}{Spreadsheet\\-embedded}}  
     & WikiSQL (Extended) & Remove vague questions &12,000 \\
     & Spider (Extended)  & Choose single table & 3,374  \\
     & Generated &Query/Update/Merge/Chart &11,587 \\
    \midrule
    Both & \model-bench & - & 73,157 \\
    \bottomrule

  \end{tabular}
  }
\end{table}

\section{Prompts for Automatically Generating Dataset}
\label{sec:prompt-self-create}
The prompt presented in Figure~\ref{prompt:questions generate} is for generating questions for both spreadsheet-embedded and document-embedded training data. 

We use templates to automatically generate table merge instructions, which are presented in the Figure~\ref{Template: merge}. 

\lstset{
    breaklines=true,
    basicstyle=\footnotesize,
    keywordstyle = \color{blue!70},
    % breakatwhitespace=false,
    commentstyle=\color{red!50!green!50!blue!50},
    rulesepcolor=\color{red!20!green!20!blue!20},
    % escapeinside=```,
    xleftmargin=2em,xrightmargin=0em,aboveskip=1em,
    framexleftmargin=2em,
    frame=single,
    % rulecolor=\color{black},
}

\begin{figure*}
    \centering
    \begin{mdframed} [style=exampledefault,frametitle={The prompt for generating questions to the Spreadsheet-Embedded and Document-Embedded training data.}]
    \small
\texttt{[Task Description]}\\
You will play the role of the user uploading the table data.\\

For the \textbf{spreadsheet-embedded tabular data}, I will provide you with the \textbf{first 10 rows of the table}.\\
For the \textbf{document-embedded tabular data}, I will provide you with \textbf{table and text}.\\

Please according to the data I provide for you, propose complex instructions for table operation.\\

The requirements need to be from the perspective of \texttt{[major category-subcategory]}.\\
The major category involves:\\
1.\textbf{Query},\\
2.\textbf{Update} (document-embedded tabular data \textbf{DOES NOT} have this category),\\
3.\textbf{Chart}.\\
\\
The subcategory of ``\textbf{Query}'' involves: \\
1.\textbf{Filter},\\
2.\textbf{Aggregate}, \\
3.\textbf{Group}, \\
4.\textbf{Sort},\\
5.\textbf{Compute},\\
6.\textbf{Sub query}, \\

The subcategory of ``\textbf{Update}'' involves: \\
1.\textbf{Update},\\
2.\textbf{Delete},\\
3.\textbf{Insert}.\\

The output \textbf{format} is: \texttt{[major category-subcategory]} corresponding instructions, such as:\\

\texttt{[Query-Aggregate]}Enhance the initial query by calculating the average number of departures per station, including only weekdays. Further, differentiate the data by peak (7am-10am and 5pm-8pm) and off-peak hours. Display each station alongside its corresponding average number of departures for both peak and off-peak hours.\\

\texttt{[Update-Insert]}Augment the table by adding a new column that shows the adjusted running time for each trip. This should be calculated by subtracting the actual arrival time from the actual departure time. Additionally, apply a time adjustment factor based on weather conditions. The factor should increase running time by 10\% for rainy days and 15\% for snowy days.\\

\texttt{[Chart]}Construct a graph illustrating the progression of reported cases in the `Eastern Mediterranean' WHO region across different years.\\

Please give me 10 complex and long instructions according to the data and answer in English. Each major category is required to be able to correspond to multiple subcategories.\\

For the \textbf{document-embedded tabular data}, you need to provide me with the table description about the data in addition.\\ 

Answer in this FORMAT:\\

\texttt{[Table Description]} (Only document-embedded data needs this part)\\

\texttt{[Instructions]}\\
10 ``[Category]Instruction''

% \texttt{\#\#\# [Table]:}\\
% \begin{color}{blue}
% year,pop,lifeexp,gdppercap,country,continent\\
% 1952,8425333,28.801,779.4453145,afghanistan,asia\\
% 1957,9240934,30.332,820.8530296,afghanistan,asia\\
% 1962,10267083,31.997,853.10071,afghanistan,asia\\
% 1967,11537966,34.02,836.1971382,afghanistan,asia\\
% 1972,13079460,36.088,739.9811058,afghanistan,asia\\
% 1977,14880372,38.438,786.11336,afghanistan,asia\\
% 1982,12881816,39.854,978.0114388,afghanistan,asia\\
% 1987,13867957,40.822,852.3959448,afghanistan,asia\\
% 1992,16317921,41.674,649.3413952,afghanistan,asia\\
% 1997,22227415,41.763,635.341351,afghanistan,asia\\
% \end{color}

    \vspace{1em}
    \end{mdframed}
    \caption{The prompt for generating questions to the Spreadsheet-Embedded and Document-Embedded training data.}
    \label{prompt:questions generate}
\end{figure*}

\begin{figure*}
    \centering
    \begin{mdframed}[style=exampledefault,frametitle={The templates for generating instructions on merge operation.}] 
    \small
``Merge two tables and keep only the rows that are successfully merged.''\\
``Merge the two tables and fill in the blanks with NAN.''\\
``Merge all rows in the two tables that \text{\{
\begin{color}{blue}
the value of 'final-weight' is greater than 168294
\end{color}
\}}, merging by entries with the same column name, keeping only the successfully merged portions.''\\
``Merge all rows in the two tables that \\ \text{\{
\begin{color}{blue}
the value of MedInc is not greater than 3.5469 and the value of AveOccup is not less than 2.816011574632264
\end{color}
\}}, merging by entries with the same column name, and fill in the blanks with NAN.''\\
``Merge all rows in the two tables, show the value of \text{\{
\begin{color}{blue}
HIRE\_DT, ANNUAL\_RT and NAME
\end{color}
\}}, merging by entries with the same column name,  keeping only the successfully merged portions.''\\
``Merge all rows in the two tables, show the value of \text{\{
\begin{color}{blue}
weight, cylinders, displacement and mpg
\end{color}
\}}, merging by entries with the same column name,  and fill in the blanks with NAN.''\\
``Merge all rows in the two tables that \text{\{
\begin{color}{blue}
the value of 'female' is greater than 0
\end{color}
\}}, show the value of \text{\{
\begin{color}{blue}
union, female, black and wage
\end{color}
\}}, merging by entries with the same column name,  keeping only the successfully merged portions.''\\
``Merge all rows in the two tables that \text{\{
\begin{color}{blue}
the value of 'FREQUENCY' is not 'A'
\end{color}
\}}, show the value of \text{\{
\begin{color}{blue}
TIME, Value, 
FREQUENCY and LOCATION
\end{color}
\}}, merging by entries with the same column name, and fill in the blanks with NAN.''\\
    \vspace{1em}
    \end{mdframed}
    \caption{The templates for generating instructions on merge operations include both internal and external merges with various restrictions. They can be replaced within braces {} according to the provided tabular data.}
    \label{Template: merge}
\end{figure*}

\section{Prompts for Baselines}
\label{sec:prompt_baselines}

% We provide prompts for models to infer on the datasets in our experiment.
% To assess the Spreadsheet-embedded model, we employ the prompt depicted in Figure~\ref{prompt:code evaluation} for inference on our dataset. Similarly, for the Document-embedded model, we utilize the prompt illustrated in Figure~\ref{prompt:text evaluation} for evaluation. 

To enhance the model's ability to generate python code in the specific format, we use the prompt shown in Figure~\ref{prompt:code for GPT} and Figure~\ref{prompt:code for GPT merge} to generate inference from Llama3.1 (8B), GPT-3.5, GPT-4, CodeLlama (13B) and Deepseek (33B) on Spreadsheet-embedded tabular data. For Document-embedded tabular data, these models use the same prompt as the proposed \model, with prompts shown in Figure~\ref{fig:overview}. For TableLlama, StructGPT, Binder and DATER, we use the prompts that they have already established.

\begin{figure*}
    \centering
    \begin{mdframed} [style=exampledefault,frametitle={The prompt for GPT-3.5, GPT4, Llama3.1 (8B), Deepseek-Coder (33B) and CodeLlama (13B) to infer on Spreadsheet-embedded scenario.}]
    \small
\texttt{[Task Description]}\\
You are an agent generating Python code.  I will provide the path to the processing table and give you a preview of the first 10 rows of the table you want to process.\\
Please follow my instructions and write Python code to generate the answer to the question according to the format I provided and output the answer in a canonical format.\\
1.  Analyze the format and content of the data in the table to determine the appropriate treatment.  May contain non-standard data, please handle this data correctly.  Make sure the generated code is of high quality and works.\\
2.  When loading data, only the path to the csv file is loaded.\\
3.  Generate code, not execute it.  You have to write a print statement at the end to output the results.\\
4.  Generate the code directly, DO NOT have the \texttt{```python```} annotation.\\
5.  Do not have any file output. If the answer is a dataframe, output the entire table instead of df.head(), unless instruction explicitly indicates the output range.\\

[Code Format]\\
import the necessary libraries\\

\# annotation for each step\\
code\\

print()\\
\texttt{[Path]}: 
\begin{color}{blue}
``data.csv''
\end{color}\\
\texttt{[Data Example]}: \\
\begin{color}{blue}
timestamp,num. busy overflows,num. calls answered,num. calls abandoned,num. calls transferred,num. calls timed out,avg. num. agents talking,avg. num. agents staffed,avg. wait time,avg. abandonment time\\
Apr 13  2017 12:00:00 AM,0,0,0,0,0,0,4,00:00:00,00:00:00\\
Apr 13  2017 12:15:00 AM,0,0,0,0,0,0,4,00:00:00,00:00:00\\Apr 13  2017 12:30:00 AM,0,0,0,0,0,0,4,00:00:00,00:00:00\\
Apr 13  2017 12:45:00 AM,0,0,0,0,0,0,4,00:00:00,00:00:00\\
Apr 13  2017 1:00:00 AM,0,0,0,0,0,0,4,00:00:00,00:00:00\\
Apr 13  2017 1:15:00 AM,0,0,0,0,0,0,4,00:00:00,00:00:00\\
Apr 13  2017 1:30:00 AM,0,0,0,0,0,0,4,00:00:00,00:00:00\\
Apr 13  2017 1:45:00 AM,0,0,0,0,0,0,4,00:00:00,00:00:00\\
Apr 13  2017 2:00:00 AM,0,0,0,0,0,0,4,00:00:00,00:00:00\\
Apr 13  2017 2:15:00 AM,0,0,0,0,0,0,4,00:00:00,00:00:00\\
\end{color}
\\
\texttt{[Instruction]}: 
\begin{color}{blue}
    Identify and delete duplicate rows from the table, if any.
\end{color}\\
\texttt{[Python Code Solution]:}\\
    \vspace{1em}
    \end{mdframed}
    \caption{The prompt for GPT-3.5, GPT4, Llama3.1(8B), Deepseek-Coder (33B) and CodeLlama (13B) to infer on Spreadsheet-embedded scenario.}
    \label{prompt:code for GPT}
\end{figure*}

\begin{figure*}
    \centering
    \begin{mdframed} [style=exampledefault,frametitle={The prompt for GPT-3.5, GPT4, Llama3.1(8B), Deepseek-Coder (33B) and CodeLlama (13B) to infer on Spreadsheet-embedded baselines about merge operation.}]
    \small
\texttt{[Task Description]}\\
You are an agent generating Python code.  I will provide the path to the processing table and give you a preview of the first 10 rows of the table you want to process.\\
Please follow my instructions and write Python code to generate the answer to the question according to the format I provided and output the answer in a canonical format.\\
1.  Analyze the format and content of the data in the table to determine the appropriate treatment. May contain non-standard data, please handle this data correctly.  Make sure the generated code is of high quality and works.\\
2.  When loading data, only the path to the csv file is loaded.\\
3.  Generate code, not execute it.  You have to write a print statement at the end to output the results.\\
4.  Generate the code directly, DO NOT have the \texttt{```python```} annotation.\\
5.  Do not have any file output. If the answer is a dataframe, output the entire table instead of df.head(), unless instruction explicitly indicates the output range.\\

\texttt{[Code Format]}\\
import the necessary libraries\\

\# annotation for each step\\
code\\

print()\\
This is a merge operation, so you need to read two files.\\
\texttt{[Path1]}: 
\begin{color}{blue}``data1.csv''\\
\end{color}
\texttt{[Data Example1]}: \\
\begin{color}{blue}
Flag Codes,TIME,LOCATION,FREQUENCY\\
,2012,AUS,A\\
,2012,AUT,A\\
,2012,BEL,A\\
,2012,CAN,A\\
,2012,CZE,A\\
M,2012,DNK,A\\
,2012,FIN,A\\
,2012,DEU,A\\
M,2012,GRC,A\\
,2012,HUN,A\\
\end{color}
\texttt{[Path2]}: 
\begin{color}{blue}``data2.csv''\\
\end{color}
\texttt{[Data Example2]}: \\
\begin{color}{blue}
Value,LOCATION\\
1.6,AUS\\
1.4,BEL\\
2.5,CAN\\
,DNK\\
1.4,FRA\\
1.2,DEU\\
,GRC\\
1.2,HUN\\
1.4,IRL\\
0.9,ITA\\
\end{color}
\texttt{[Instruction]}: 
\begin{color}{blue}Combine all rows from both tables, display the values of TIME and LOCATION columns, and group them by the shared column names. Only display the merged rows that were successful.\\
\end{color}
\texttt{[Python Code Solution]}:\\
    \vspace{1em}
    \end{mdframed}
    \caption{The prompt for GPT-3.5, GPT4, Llama3.1 (8B), Deepseek-Coder (33B) and CodeLlama (13B) to infer on Spreadsheet-embedded scenario about merge operation.}
    \label{prompt:code for GPT merge}
\end{figure*}

\section{Meta Evaluation of DeepSeek-V3}
\label{sec:critiqueds}

We conduct a meta-evaluation of DeepSeek-V3 \cite{liu2024deepseek} through human annotations.

Initially, we sample 400 instances from our test sets. Among them, 200 instances are from the document-embedded test sets, with WikiTQ, TAT-QA, and FeTaQA each comprising 50 instances, and 200 instances are from the spreadsheet-embedded test sets, with 150 instances for query operation and 50 instances for chart operation. Other operations, including update and merge, are not sampled because they are evaluated by exact match without the need for DeepSeek-V3. For each instance in the test set, DeepSeek-V3 accepts the reference answer and the response of \smodel as input and outputs a score from 0 to 10 to reflect how well the assistant's answer matches the reference answer. The prompt to indicate the scoring criteria to DeepSeek-V3 and instruct it to score is shown in Figure~\ref{DeepSeek-V3:prompt}. A response obtaining a score higher than a threshold is considered correct. The threshold for document-embedded and spreadsheet-embedded tabular data is set at 7 and 5, respectively. Then we allow human annotators to score each response using the same scoring criteria as DeepSeek-V3. Finally, we compare the human rating results with DeepSeek-V3's rating results and compute the proportion of false positive and false negative data, which refers to the incorrect responses correctly judged by DeepSeek-V3 and the correct responses that are mistakenly predicted by DeepSeek-V3.

The comparative analysis of outcomes from DeepSeek-V3 and human scoring results reveals that, within the document-embedded tabular data, DeepSeek-V3 exhibits a false positive rate of 1\% and a false negative rate of 2\%. For the spreadsheet-embedded tabular data, the false positive rate noted is 2\%, with a false negative rate of 2.5\%. Consequently, on the mixed test set, we obtain a 1.5\% false positive rate and a 2.25\% false negative rate. The congruence in the distribution of scores between DeepSeek-V3 and human evaluations substantiates the validity of employing the DeepSeek-V3 for assessing response quality.

We also analyze the reasons causing the errors of DeepSeek-V3. The false positive instances are primarily due to that the response generated by \smodel is long, and the reference answer is a proper subset of the response. In this case, DeepSeek-V3 tends to give a high score to model's responses regardless of the incorrect responses. 
% Figure \ref{false positive-1} and Figure \ref{false positive-2} show representative examples of false positives for document-embedded and spreadsheet-embedded tabular data.
False negatives are often caused by the model's response not providing a specific answer at the beginning or end of the response, or the response is too long. 
% Examples of false negatives are illustrated in Figure \ref{false negative-1} and Figure \ref{false negative-2}.

In our method, we also employ DeepSeek-V3 for judging whether the extended reasoning process is consistent with the reference answers. Thus, false positive instances judged by DeepSeek-V3 potentially compromises the accuracy of model-generated responses. Fortunately, the ratio of such instances only account for 1.5\%. In contrast, false negative instances do not detract from the model-generated answer's quality, because these instances are excluded from the extended training data.

\begin{figure*}
    \centering
    \begin{mdframed} [style=exampledefault,frametitle={The prompt for DeepSeek-V3 }]
    \small
[Instruction]

Please act as an impartial judge and evaluate the quality of the response provided by an AI assistant to the user question displayed below. 

\vspace{1em}
Your evaluation should consider factors such as the helpfulness, relevance, accuracy, depth, creativity, and level of detail of the response. Begin your evaluation by providing a short explanation.

\vspace{1em}
You will be given a high-quality reference answer and the assistant's answer. Be as objective as possible. You should first provide your explanation IN CHINESE, then you must rate the response on a scale of 1 to 10 by STRICTLY following the below MAPPING for the relation between the scores and response quality:
\vspace{1em}
\vspace{1em}

1) The score 1~2 stands for very chaotic or absence of answer, and the AI assissant completely failed to address the instructions. The gap between the AI assistant's answer and the high-quality reference answer is huge and insuperable.

\vspace{1em}
2) The score 3~4 indicates fragment-like responses from AI assistant's answer. It did not provide answers in proper grammar, fluency, or accuracy. There are obvious gaps between the high-quality reference answer and the AI assistant's response.

\vspace{1em}
3) The score 5~6 indicates for existence of minute disadvantage from the AI assistant's answer compared to the high-quality reference answer. Yet the AI assistant did provide an average answer. The AI assistant either did not fully satisfy instructions, or was somewhat short of helpfulness, relevance, depth, creativity, or detailedness. The disadvantages from the AI assistant's answer overwhelm its advantages.

\vspace{1em}
4) The score 7~8 indicates the AI assistant provided a good answer as well as the high-quality reference answer, satisfying the instruction, while addressing good helpfulness, relevance, accuracy, depth, creativity, and level of detail of the response. The AI assistant might have flaws compared to the reference answer, but that does not overwhelm the above advantages.

\vspace{1em}
5) The score 9~10 indicates the AI assistant responsed better than the provided reference answer in most aspects, fully achieved the instruction goal, and have unique advantages to the reference answer. Or the content of the reference answer can be completely covered.

\vspace{1em}
\vspace{1em}
You should give scores around 7 if you do not find obvious advantages or disadvantages. You should seriously consider the above guide before give lowest and highest scores such as 1 or 10, and avoid such situation if you do not have sound explanations.

\vspace{1em}
Avoid any positional biases and ensure that the order in which the responses were presented does not influence your decision. Do not allow the length of the responses to influence your evaluation. Do not favor certain names of the assistants. AND again, VERY IMPORTANTLY, after you provide your explanation IN CHINESE, you must rate the response strictly following this \textbf{FORMAT}: 

\vspace{1em}

Rating: [[score]]
\vspace{1em}

[Question]

\vspace{1em}

[Reference Answer]

\vspace{1em}

[Assistant's Answer]

    \vspace{1em}
    \end{mdframed}
    \caption{The prompt for DeepSeek-V3 as the rating model, which contains scoring criteria.}
     \label{DeepSeek-V3:prompt}
\end{figure*}

\end{document}